\definecolor{darkgreen}{rgb}{0,0.5,0}
\definecolor{azure(colorwheel)}{rgb}{0.0, 0.5, 1.0}
\newtheorem{theorem}{Theorem}
\title{\LARGE \bf
Scalable Safety-Critical Policy Evaluation \\
with Accelerated Rare Event Sampling
}
\author{
    Mengdi Xu$^{1}$, 
    Peide Huang$^{1}$, 
    Fengpei Li$^{2, 3}$, 
    Jiacheng Zhu$^{1}$, 
    Xuewei Qi$^{4}$, 
    Kentaro Oguchi$^{4}$, \\
    Zhiyuan Huang$^{1,5}$, 
    Henry Lam$^{2}$, 
    Ding Zhao$^{1}$
\thanks{$^{1}$ Mengdi Xu, Peide Huang, Jiacheng Zhu, Zhiyuan Huang, and Ding Zhao are with Carnegie Mellon University.
        {\tt\small mengdixu, peideh, jzhu4, dingzhao@andrew.cmu.edu}}%
\thanks{$^{2}$ Fengpei Li and Henry Lam are with Columbia University.
        {\tt\small khl2114@columbia.edu }}%
\thanks{$^{3}$ Fengpei Li is also with Morgan Stanley Machine Learning Research.
        {\tt\small fl2412@columbia.edu }}%
\thanks{$^{4}$ Xuewei Qi and Kentaro Oguchi are with Toyota Motor North America R$\&$D.
        {\tt\small tony.qi, kentaro.oguchi@toyota.com}}%
\thanks{$^{5}$ Zhiyuan Huang is also with Tongji University.
        {\tt\small zhuang2@andrew.cmu.edu}}%
}
\begin{document}

\maketitle
\thispagestyle{empty}
\pagestyle{empty}

\begin{abstract}
Evaluating rare but high-stakes events is one of the main challenges in obtaining reliable reinforcement learning policies, especially in large or infinite state/action spaces where limited scalability dictates a prohibitively large number of testing iterations. On the other hand, a biased or inaccurate policy evaluation in a safety-critical system could potentially cause unexpected catastrophic failures during deployment. This paper proposes the Accelerated Policy Evaluation (APE) method, which simultaneously uncovers rare events and estimates the rare event probability in Markov decision processes. The APE method treats the environment nature as an adversarial agent and learns towards, through adaptive importance sampling, the zero-variance sampling distribution for the policy evaluation. Moreover, APE is scalable to large discrete or continuous spaces by incorporating function approximators. We investigate the convergence property of APE in the tabular setting. Our empirical studies show that APE can estimate the rare event probability with a smaller bias while only using orders of magnitude fewer samples than baselines in multi-agent and single-agent environments.
\end{abstract}


\section{INTRODUCTION}

There has been a growing interest in applying reinforcement learning (RL) to complex high-stakes robotics problems, including controlling autonomous vehicles and healthcare assistant robots \cite{kalra2016driving, chen2021context}. 
As one can expect, before deployment of any RL policy, such safety-critical applications often require an accurate policy evaluation because inaccurate estimations could lead to false optimism or even catastrophic consequences. However, a main difficulty for policy evaluation in these settings is that we are crucially interested in assessing certain rare but high-stake events in safety-critical systems.
The \textit{de facto} standard evaluation method is the vanilla Monte Carlo (MC) which, in a multitude of practical settings of interest, requires a prohibitively large number of testing before the evaluation can be deemed statistically valid \cite{rubinstein2016simulation}.
For example, \cite{kalra2016driving} shows that, in order to demonstrate the safety of self-driving cars, the number of miles that needs to be clocked in is technically in the hundreds of billions.
Consequently, when such large-volume testing is high-stakes, costly, or time-consuming, a growing number of studies have been developed to accelerate the estimation of rare event probability as the policy evaluation metric \cite{zhao2016accelerated, corso2020scalable}. 
However, existing literature typically either fails to exploit the sequential, interactive nature of tasks by confining to specific failure-causing initial conditions \cite{uesato2018rigorous, o2018scalable} or only focuses on small state/action spaces due to the curse of dimensionality \cite{desai2001markov, ahamed2006adaptive}.

\begin{figure}
\centering
\includegraphics[width=7cm]{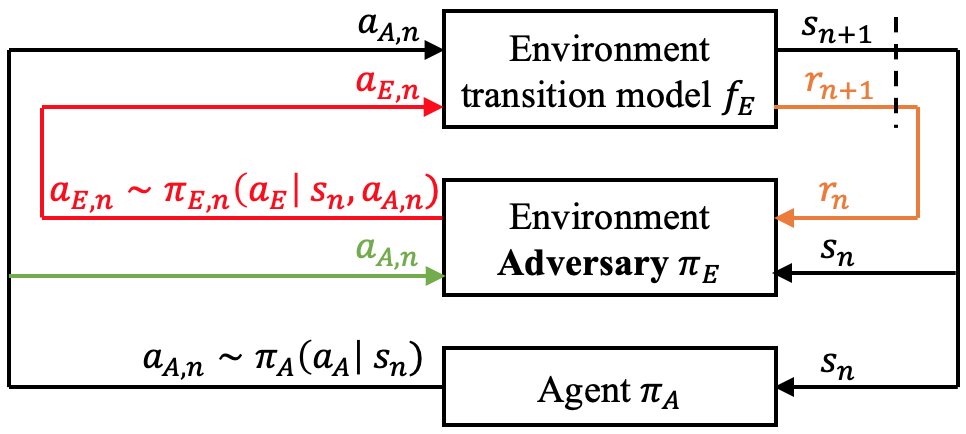}
\caption{
The interactions among the environment transition model, the agent and the environment adversary introduced by our proposed accelerated policy evaluation method. 
The reward signal $r$ is for updating the adversary policy $\pi_E$. The subscript $n$ denotes the time step.
\vspace{-0.3in}
}
\label{fig:ape_pipeline}
\end{figure}

In this paper, we develop an efficient and scalable policy evaluation method for estimating rare event probabilities in sequential decision-making tasks, which expedites the policy evaluation in large or continuous state/action spaces. In particular, we specify a rare event set and estimate the probability of entering the set under a given RL policy in an environment of interest.
We focus on episodic settings with \textit{an adjustable grey-box testing ground} that simulates the environment (i.e., a simulator with adjustable disturbances or vehicle policies), which is the common practice in third-party evaluations or in curriculum design \cite{uesato2018rigorous, dennis2020emergent}.
We summarize our major contributions as follows:
\begin{itemize}
    \item We propose the \emph{Accelerated Policy Evaluation} (APE) method, suitable for Markov decision processes (MDPs) and motivated by adaptive importance sampling \cite{ahamed2006adaptive}.
    In a novel way, APE introduces an adversary that controls the environment transition at each step and gradually exploits sequential interactions that cause rare events. We further theoretically establish the convergence property of APE in the tabular setting.
    \item We propose a scalable implementation of APE and denote it as \emph{APE\_scalable}. \emph{APE\_scalable} incorporates two function approximators: a Gaussian Process (GP) \cite{rasmussen2003gaussian} to estimate rare event probability with high data efficiency, and a conditional normalizing flow model \cite{papamakarios2017masked} to expressively represent the adversary's policy. We further design a two-timescale gradient-based updating rule for GP to adapt to the rare event setting. 
    \item We empirically show that APE\_scalable estimates the rare event probability with a smaller bias and orders of magnitude fewer samples than baselines, in driving scenarios \cite{highway-env} and in a LunarLander control task \cite{brockman2016openai}.
\end{itemize}


\section{Related Work}
Motivated by the inherent difficulty of assessing rare, safety-critical events in lab or field tests, there has been a growing literature to maximize the sampling efficiency of rare event probability estimation.
Two powerhouses are the importance sampling method \cite{zhao2016accelerated, zhao2017accelerated, arief2020deep} and the multi-level splitting method \cite{ cerou2007adaptive}. 
However, little literature focuses on MDP settings. Recent developments in \cite{uesato2018rigorous, ding2021multimodal} adjust the environment setting at the beginning of each episode while ignoring the intermediate interactive steps. However, our proposed APE method is designed to leverage the sequential nature of tasks and hence is more suitable in the RL settings.
The APE is similar to the adaptive stochastic approximation (ASA) for Markov chains \cite{ahamed2006adaptive}. Nonetheless, ASA restricts to discrete state space and is not incorporated with the decision process, thus handicapping its applicability for sophisticated decision-making agents such as continuous-space RL. Our setting is similar to \cite{corso2020scalable} because both utilize a dynamic programming paradigm. However, \cite{corso2020scalable} relies on discretizing and decomposing the simulation scene to make it scalable.

APE is within the regime of safe validation \cite{corso2020survey} and specifically focuses on finding the distribution of failure-causing disturbances and estimating the failure rate. Adversarial evaluation \cite{uesato2018rigorous, werpachowski2019detecting}, mainly motivated by attacks on RL agents,
also aims to find failure cases but in a distribution-free manner. 
Instead of searching for worst-case scenarios as in adversarial attacks \cite{eykholt2018robust}, APE aims to directly find those rare but possible (failure) cases when the agent interacts with the original environment of interest.

APE utilizes updating rules involving dynamic programming and importance weights over changing policies. The rules share a similar form with off-policy evaluation \cite{xie2019towards} methods
including Retrace~\cite{munos2016safe}, V-trace~\cite{espeholt2018impala}, and importance resampling~\cite{schlegel2019importance}. 
Concretely, the environment's ground truth policy $\pi_{E, gt}$ and the proposed adversary policy $\pi_E$ in APE are analogous to the evaluation policy and behavior policy in off-policy evaluation \cite{cotta2019off}, respectively. 



\section{Rare Event Probability as Policy Value}
A general MDP consists of a tuple $(\mathcal S, \mathcal A, p, r)$. $\mathcal S $ and $ \mathcal A$ denote the state and action space. $s'$ denotes the proceeding state of $s$. The stochastic transition model is $p(s'|s,a): s,s'\in\mathcal S, a \in\mathcal A$
and the one-step reward is $r(s,s'), s,s' \in\mathcal S$. 
In the rare event setting, we consider the state space $\mathcal S = \mathcal T \cup \mathcal I $ is a partition of terminal states $\mathcal T$ and interior states $\mathcal I$, $\mathcal T \cap \mathcal I = \emptyset$. We assume $\mathcal T$ is reachable from any states in $\mathcal I$ for all policies under consideration and 
the episode length $\tau$ is finite almost surely. The rare event set is denoted by $\mathcal R$ and $\mathcal R\subset \mathcal T $. 

We formulate the rare event probability (or the probability of hitting $\mathcal{R}$ before $\mathcal T \backslash \mathcal R$) starting from an initial state $s_0 \in \mathcal I$ in an environment of interest, as the expected undiscounted total reward until termination.
With $v^\star$ as the true value, we have the following Bellman equation holds:
\begin{align}
    & v^\star(s_0) \coloneqq  \mathbb{E}_{p_{E, gt}, \pi_A} \Big[ \sum_{n=0}^{\tau-1} r(s_{n}, s_{n+1}) \Big] = \label{eq:rare_prob_definition}
    \\ &\sum_{a_A \in \mathcal{A}_A} \pi_A(a_A|s_0) \sum_{s_1 \in \mathcal{S} }p_{E,gt}(s_1 | s_0, a_A) \Big[ r(s_0, s_1) + v^\star(s_1) \Big],
    \nonumber
\end{align}
where $\pi_A$ is the given Markov policy (typically obtained through training). $p_{E,gt}$ is the ground truth transition probability of the environment of interest. $n$ is the timestep. Furthermore, $a_A \sim \pi_A$ is the agent action, and $\mathcal{A}_A$ is the agent action space. The one-step reward is an indicator function with $r(s, s') = 1$ if $s' \in \mathcal{R}$ and 0 otherwise. 
We set $v^\star(s)=0$ for $s\in\mathcal T$.
The goal is to estimate  $v^\star(s), s\in\mathcal I$. 

In settings with large/continuous state space, approximating values in \eqref{eq:rare_prob_definition} is expensive or infeasible. Moreover, it is hard to guarantee a satisfactory level of estimation error because the considered rare event region $\mathcal R$ is not visited frequently enough. In order to frequently visit $\mathcal R$ in these high or infinite-dimensional situations, one is motivated to consider the importance sampling \cite{asmussen2007stochastic} to modify the environment transition probability, which achieves variance reduction by approaching a target (zero-variance) distribution and learns to generate rare events efficiently.

\section{Accelerated Policy Evaluation (APE)}
\label{sec:ape}

In this section, we introduce the proposed APE paradigm.
In MDPs, the environment transition probability relates to both agent A's policy $\pi_A$ and the environment's transition probability $p_E(s' | a_A, s)$, $p_{s, s'} = \sum_{a_A \in \mathcal{A}_A} \pi_A(a_A|s) p_E(s' | a_A, s)$. Therefore, $v^\star(s_0)$ can be reformulated by rolling out trajectories with the newly proposed environment transition probability $p_E^{(n)}$ and agent policy $\pi_A^{(n)}$ at step $n$ as follows:
\begin{align}
    v^\star(s_0) &= \mathbb{E}_{p_{E}^{(n)}, \pi_A^{(n)}} \Big[ \sum_{n=0} \rho_n (s_n, s_{n+1}, a_A) \cdot r(s_{n}, s_{n+1}) \Big] \nonumber
    \\ &= \sum_{a_A \in \mathcal{A}_A} \pi_A^{(0)}(a_A|s_0) \sum_{s_1 \in \mathcal{S} }p_{E}^{(0)}(s_1 | s_0, s_A) \cdot \nonumber \\ 
    & \rho_0(s_0, s_1, a_A)  \Big[ r(s_0, s_1) + v^\star(s_1) \Big], \label{eq:rare_prob_Bellman_IS}
\end{align}
where $\rho_n(s_n, s_{n+1}, a_A)$ is the importance weight at step $n$,
\begin{align}
    \rho_n(s_n, s_{n+1}, a_A) = \frac{ \pi_A(a_A|s_n) \cdot  p_{E,gt}(s_{n+1} | a_A, s_n)}{ \pi_A^{(n)}(a_A|s_n) \cdot p_{E}^{(n)}(s_{n+1} | a_A, s_n) }. \label{eq:is_weight}
\end{align}

\textbf{a) Environment Nature as an Adversary.}
One key idea in APE is to treat uncertainties in environment transition probabilities $p_E(\cdot | \cdot, \cdot)$ (e.g., the actions of surrounding vehicles on the road or sensor noises in a robotics control task), as stochastic decisions made by the environment nature $E$ with policy $\pi_E(\cdot): \mathcal{S} \times \mathcal{A}_A \times \mathcal{A}_E \rightarrow \mathbb{R}$, where $\mathcal{A}_E$ is the environment nature's action space.
Treating the environment nature as an adversary agent is widely used in robust learning and multi-agent RL \cite{pinto2017robust}. 
It reduces the computation complexity when the selected environment agent $E$'s action space $\mathcal{A}_E$ has a smaller dimension than that of the state space $\mathcal{S}$, and relaxes the assumption of a known environment transition model $f_E: \mathcal{S} \times \mathcal{A}_A \times \mathcal{A}_E \rightarrow \mathcal{S}$. Note that we \emph{assume known and adjustable environment policy $\pi_E$ and unknown $f_E$ in \eqref{eq:env_forward}} which lead to a grey-box environment setting.

Formally, at step $n$, we have the following equations hold
\begin{align}
    p(s_{n+1} | a_{A,n}, s_n) = \pi_{E}(a_{E,n}| a_{A,n}, s_n)\\
    s_{n+1} = f_E(s_n, a_{A,n}, a_{E,n}) \label{eq:env_forward}\\
    a_{E,n} = f_E^{-1}(s_n, a_{A,n}, s_{n+1}) \label{eq:env_inverse}
\end{align}

Moreover, we only focus on importance sampling over $p_E(s_{n+1} | a_A, s_n)$ and leaves $\pi_A^{(n)}(a_A|s_n) = \pi_A(a_A|s_n), \forall n$. 
Therefore the importance weight in \eqref{eq:is_weight} becomes
\begin{align}
    \rho_n(s_n, s_{n+1}, a_{A,n}) & = \frac{ p_{E, gt}(s_{n+1} | a_{A,n}, s_n)}{ p_E^{(n)}(s_{n+1} | a_{A,n}, s_n) } \nonumber \\
    & =\frac{ \pi_{E, gt}(a_{E,n} | a_{A,n}, s_n)}{ \pi_{E}^{(n)}(a_{E,n} | a_{A,n}, s_n) }. \label{eq:ape_is_weight}
\end{align}

\textbf{b) Updating Rules for $v(s)$ and $\pi_E(a_E | s, a_A)$.}
Calculating the expectation in \eqref{eq:rare_prob_Bellman_IS}, which includes the rare event probability, is generally intractable in non-discrete settings. 
In APE, we use TD method \cite{sutton2018reinforcement} which combines MC simulation and dynamic programming to update the value $v(s)$. More concretely, at the end of step $n$,
\begin{align}
   & v^{(n+1)}(s_n)  \leftarrow  (1-\alpha_n)v^{(n)}(s_n) +   \nonumber \\
    &  \sum_{a_{A} \in \mathcal{A}_A} \rho_n (s_n, s_{n+1}, a_{A})  \alpha_n [r(s_n, s_{n+1}) + v^{(n)}(s_{n+1})] \label{eq:ape_v_update_exact} \\
    & \approx (1-\alpha_n) v^{(n)}(s_n)+ \alpha_n \cdot \nonumber\\
    & \quad \ [r(s_n, s_{n+1}) + v^{(n)}(s_{n+1})] \cdot \rho_n (s_n, s_{n+1}, a_{A,n}), \label{eq:ape_v_update_approx}
\end{align}
where $\alpha_n$ is a decaying learning rate. 
To make APE suitable for online learning and get rid of the summation over agent actions, we approximate the importance weight over state transition probability with the one-step importance weight in \eqref{eq:ape_is_weight} to get an online stochastic version as in \eqref{eq:ape_v_update_approx}.
\eqref{eq:ape_v_update_approx} is accurate if the agent policy $\pi_A$ is deterministic. 

Motivated by ASA \cite{ahamed2006adaptive}, which accelerates the rare event probability estimation in Markov chains, we design the updating rule for $\pi_E$ as follows.
At the end of step $n$, the un-normalized policy around the newly sampled action $a_{E,n}$ is derived with the updated value function $v^{(n+1)}$ and the original environment policy $\pi_{E, gt}$
\begin{align}
    & \Tilde{\pi}_E^{(n+1)}(a_{E,n}| a_{A,n}, s_n) \leftarrow \text{max} \Big( \delta , \nonumber\\ 
    &\pi_{E,gt}(a_E| a_{A,n}, s_n) \Big( \frac{r(s_n, s_{n+1}) + v^{(n+1)}(s_{n+1})}{v^{(n+1)}(s_n)} \Big)  \Big),
   \label{eq:MDP_p_update}
\end{align}
where $\delta$ is a positive constant to guarantee that the one-step importance weight $\rho_n$ is bounded. 

APE accelerates the policy value estimation by learning a properly designed importance policy $\pi_E$ to reduce the estimation variance. 
Different from adaptive MC \cite{desai2001markov} and cross-entropy methods (CEM) \cite{de2000analysis}, 
APE utilizes a stochastic approximation paradigm and updates state values based on each single transition without waiting for a final outcome (without entering the terminal set).

\textbf{c) Convergence in Tabular Case}
\begin{theorem} Assume tabular MDP settings with discrete state and action spaces and let $\pi_E^{\star}$ denote the zero-variance environment policy. If step size $\alpha_n$ in Eq.~\ref{eq:ape_v_update_exact} satisfies $\sum_n \alpha_n = \infty$ and $\sum_n \alpha_n^2 < \infty$, the iterative updating rules (Eq.~\ref{eq:ape_v_update_exact} and Eq.~\ref{eq:MDP_p_update}) satisfies $v^{(n)}\rightarrow v^{\star}$ and $\pi_E^{(n)} \rightarrow \pi_E^{\star}$. 
\end{theorem}

The proof of Theorem 1 directly follows the structure of Theorem 1 in \cite{ahamed2006adaptive}. By defining the value function only related to state, the difference between the APE value update rule and that in \cite{ahamed2006adaptive} lies in the definition of the importance weight. However, the importance weight-related terms are included in a sequence of random vectors with zero mean and bounded norm, which do not affect the contraction of the remaining part. 
Based on the Perron-Frobenius theorem, we can show that the remaining part is a contraction operator and thus has a fixed point. Then by applying Theorem 1 and Theorem 3 in \cite{tsitsiklis1994asynchronous}, we have the convergence of value function $v$. The convergence of $\pi_E$ results from the convergence of $v$ and the definition of $\pi_E$'s updating rule.

\begin{figure}[t]
    \centering
    \includegraphics[width=5.2cm]{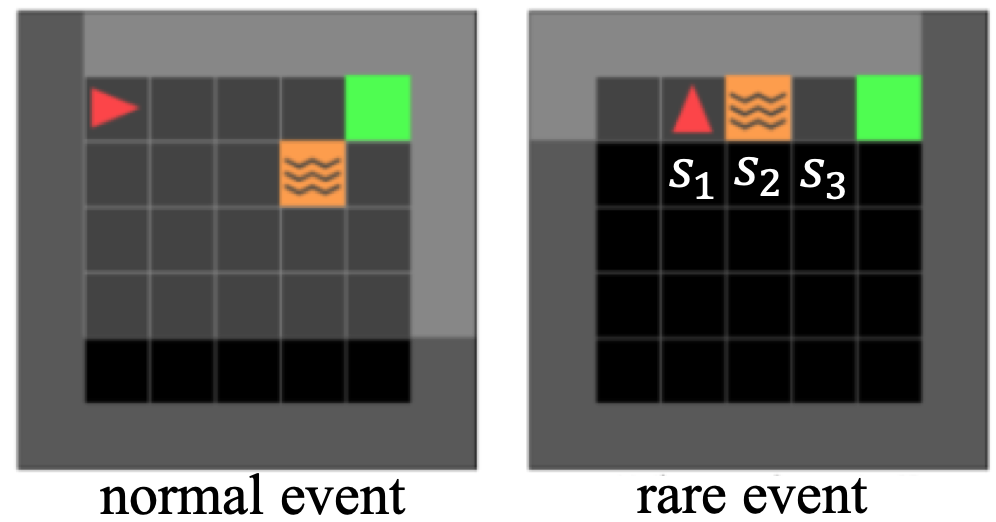} 
    \caption{A minigrid environment with rare events.
    \vspace{-0.1in}
    }
    \label{fig:minigrid}
\end{figure}

\begin{table}[t]
    \centering
    \caption{Estimation results at convergence in minigrid. We obtain the ground truth rare event probabilities via vanilla MC with a sufficiently large number of episodes.
    }
\begin{tabular}{c|c|c|c|c}
    \hline
    methods & \multicolumn{2}{|c|}{MC} & \multicolumn{2}{|c}{APE} \\ \hline
    metric  & mean & std & mean & std \\ \hline
     $v(s_1) \ (10^{-3})$ & $3.99$ & $0.38$ & $3.90$ & $0.24$ \\ \hline
     $v(s_2)\ (10^{-3})$ & $3.93$ & $0.47$ & $3.94$ & $0.23$ \\ \hline
     $v(s_3)\ (10^{-3})$ & $4.00$ & $0.50$ & $3.94$ & $0.17$ \\ \hline
     num. of time steps & $78300$ & $10600$ & $8884$ & $2595$ \\ \hline
     num. of episodes & $19306$ & $2612$ & $552$ & $134$ \\ \hline
\end{tabular}
    
    \vspace{-0.2in}
    \label{tab:grid}
\end{table}

\subsection{Accelerated Policy Evaluation with gym-minigrid}
\label{sec:ape_grid}

We first empirically show that the proposed APE method \emph{without using function approximations} converges to the ground truth rare event probabilities in discrete MDP settings. 
We experiment in a gym-minigrid \cite{gym_minigrid} environment as shown in Fig.~\ref{fig:minigrid}.  
Rare events happen when the agent (the red triangle) fails to reach its destination (the green square), such as getting stuck in non-terminal states or entering the lava (the yellow block).
Grids denoted as $s_1$, $s_2$ and $s_3$ are possible initial states of lava.
The agent policy $\pi_A$ is trained with the deep Q learning \cite{mnih2013playing} until convergence. The environment policy $\pi_E$ controls the policy of the lava.

We compare the efficiency of the proposed APE with vanilla MC for estimating the rare event probability. We define convergence of an estimated value when the estimation deviation in the past $2000$ steps is less than $1\%$ of the final estimate. 
Tab.~\ref{tab:grid} shows that when the ground truth rare event probability is around $4\times 10^{-3}$, APE successfully converges to the true rare event probability. APE also demonstrates high efficiency by requiring one magnitude fewer data and achieves a smaller variance than MC at convergence.


\section{Scalable Implementation: APE\_scalable }
\label{sec:ape_continuous}

In this section, we introduce how to improve the scalability of APE to handle large discrete or continuous MDP settings. The general idea is to select proper function approximators to represent the value function $v(s)$ and the environment policy $\pi_E$. 
Before doing that, we first establish some notations.
The value function approximator $v_{\psi}(s)$ with parameter $\psi$ takes state $s$ as input and outputs a probability ranging from 0 to 1.
The policy approximator $\pi_{E,\theta}$ parameterized with $\theta$ takes the state-action concatenation $(s, a_A)$ as input and outputs a probability density on $a_E$.
Therefore, we have $\rho_{n,\theta} = \pi_{E,gt}(a_{E, n} | a_{A,n}, s_n) / \pi_{E,\theta}^{(n)}(a_{E, n} | a_{A, n}, s_n) $.
At step $n$, a data pair $d_n  = (s_n, a_{A,n}, a_{E, n}, s_{n+1}, r(s_n, s_{n+1}), \rho_{n,\theta}) $ is appended to a history dataset $\mathcal{D}$.
Assume $s \in \mathcal{S} \subset \mathbb{R}^{d_s}$, $a_A \in \mathcal{A}_A \subset \mathbb{R}^{d_a}$, and $a_E \in \mathcal{A}_E \subset \mathbb{R}^{d_e}$.

\subsection{Value Function Approximation} 
We learn $v_{\psi}(s)$ by minimizing differences between the predicted values and TD targets $v_{\psi, TD}(s_n)$. For data pair $d_n$, we have
\begin{align}
    \rho_{n, \theta} = \pi_{E, gt}(a_{E,n} | a_{A,n}, s_n) / \pi_{E, \theta}^{(n)}(a_{E,n} | a_{A,n}, s_n), \label{eq:scalable_ape_is_weight}\\
    v_{\psi, TD}(s_n) = ( r(s_n, s_{n+1}) + v_{\psi}(s_{n+1})  )\cdot \rho_{n, \theta}. \label{eq:scalable_TD_target}
\end{align}

One approximation approach is representing $v(s)$ with an expressive deep neural network (DNN) with low prediction complexity. However, DNNs are sensitive to imbalanced data and suffer from the catastrophic forgetting problem \cite{krawczyk2016learning, chrysakis2020online}, especially in online learning settings. 
A competitive model for DNN is the Gaussian Process (GP) regression model. The equivalence between GPs and infinitely wide DNNs is derived in \cite{lee2017deep}. GPs are data-efficient and flexible in making predictions as non-parametric models but sacrifice the prediction complexity \cite{rasmussen2003gaussian}.  
Considering that the sampled rare events may be drowned out in $\mathcal{D}$, especially in the initial learning phase, we choose to use GP as the function approximator for its data efficiency. 

\textbf{a) Use GP to Represent $v_{\psi}(\cdot)$.} Let the data buffer for training and prediction as $\mathcal{D}_{GP} = \{ (s_i, v_{\psi, TD}(s_i) )^m \}$. The predicted value given $s^\star \in \mathcal{S}$ is drawn from a distribution 
\begin{align}
    p(f | \mathcal{D}, s^\star) = \mathcal{N}(\text{m}(f), \text{cov}(f)). \label{eq:GP}
\end{align}
Define $x_i = s_i$ and $y_i = v_{\psi, TD}(s_i)$ and aggregate them into $X \in \mathbb{R}^{d_s \times m }$ and $Y \in \mathbb{R}^{1 \times m}$.
The mean function is $ \text{m}(f) =K(s^\star, X)[K_i(X, X) + \sigma^2 I]^{-1}Y$ and the covariance matrix is $\text{cov}(f) = K(s^\star, s^\star)  - K(s^\star, X)[K(X, X) + \sigma^2 I]^{-1} K_i(X, s^\star)$. 
$\sigma$ is the standard deviation of observation noise.
The matrix $K$ is fully specified by the kernel function $k(\cdot, \cdot)$, which defines the function smoothness. We use the scaled squared exponential kernel $k(x, x') = w^2 \exp ( -\frac{1}{2} \sum_{j=1}^{d_s} w_{j}(x^j- x'^j)^2 )$, where $w_{j}$ is the reciprocal of the lengthscale of state dimension $j$, and $w$ is the output scale. $\psi = \{ w, w_{1}, ... , w_{d_s}, \sigma \}$ contents kernel parameters.

We update $\psi$ with gradient descent to minimize the negative log-likelihood for $\mathcal{D}_{GP}$ together with an $L_1$ regularizer over lengthscale parameters:
\begin{align}
    &\psi \leftarrow \psi- \alpha_v \nabla_{\psi} \Big[ \lambda \cdot \sum_{j}^{d_s}|w_j|   \nonumber \\
    &+ \sum_{ (s_i, v_{\psi, TD}(s_i)) \in \mathcal{D}_{GP}} -\log p(v_{\psi, TD}(s_i) | s_i, \mathcal{D}_{GP}) \Big], \label{eq:J_gp_batch_gd}
\end{align}
where $\lambda$ is the regularization penalty, and $\alpha_v$ is the learning rate. Different from standard GP regression with a fixed number of data points and fixed targets, APE learns GP in an online learning manner via streaming collected data pairs \cite{nguyen2008local} as well as in a dynamic programming paradigm with changing TD targets similar to \cite{deisenroth2008approximate}.


\subsection{Environment Adversary Policy Approximation} 
In online setting, the environment's policy updates based on the most recent data pair $d_n$ and newly updated $v_{\psi}(s)$. The un-normalized target density around the newly sampled action $a_{E,n}$ conditioned on $C_n = [a_{A,n}, s_n]$ is
\begin{align}
    \Tilde{\pi}_{E}(a_{E,n} | C_n) =  \pi_{E}(a_{E,n} | C_n) \Big( \frac{r(s_n, s_{n+1}) + v_{\psi}(s_{n+1})}{v_{\psi}(s_n)} \Big).
    \label{eq:NF_point_target}
\end{align}
To cope with continuous spaces, we achieve the modification by adding a normal distribution $\mathcal{N}(a_{E,n}, \sigma)$ with $\sigma$ much smaller than that of the ground truth policy $\pi_{E}(a_E | x, a_A )$. The target probability density function conditioned on $C_n$ is
\begin{align}
    p_{E}(a_E | C_n) = (\pi_{E,\theta}(a_E | C_n) + \beta \cdot \mathcal{N}(a_{E,n}, \sigma))/(1+\beta),
    \label{eq:NF_target}
\end{align}
where $\beta = \sqrt{2 \pi} \sigma (\Tilde{\pi}_{E}(a_{E,n} | C_n) - \pi_{E,\theta}(a_{E,n} | C_n))$.

\begin{algorithm}[tb]
\caption{Scalable Accelerated Policy Evaluation}
\label{alg:algorithmAPE}
\textbf{Input}: Agent policy to evaluate $\pi_A$, ground truth environment policy $\pi_{E, gt}$, warm start policy $\hat{\pi}_E$\\
\textbf{Parameter}: Total evaluation steps $N$  \\
\textbf{Output}: $v_{\psi}(\cdot)$, $\pi_{E,\theta}(\cdot)$
\begin{algorithmic}[1] 
\STATE Pretrain $\pi_{E,\theta}(\cdot | a_A, x)$ with target conditional probability $\hat{\pi}_{E}(\cdot | a_A, s), \ \forall a_A\in \mathcal{A}_A, s \in \mathcal{S} $
\FOR{$n=0$ {\bfseries to} $N-1$}
\STATE Reset environment
\STATE Initialize empty episode buffer $\mathcal{D}_e$
\REPEAT
\STATE Sample actions $a_A \sim \pi_A(\cdot | s)$, $a_E \sim \pi_{E, \theta}(\cdot|a_A, s)$
\STATE Execute $a_A$ and $a_E$, observe $s'$ and $r(s, s')$
\STATE Get one-step importance weight $\rho_{n, \theta}$ with \eqref{eq:scalable_ape_is_weight}
\STATE Add $d =(s, a_A, a_E, s', r(s, s'), \rho_{n, \theta})$ to $\mathcal{D}$ and $\mathcal{D}_e$
\STATE Update $v_{\psi}(\cdot)$ with two-timescale based on Algo.\ref{alg:algorithmAPEV}
\STATE Update $\pi_{E, \theta}$ with dense rewards based on Algo.\ref{alg:algorithmAPENF}
\STATE $s \leftarrow s'$
\STATE $n \leftarrow n+1$
\UNTIL{episode finish}
\ENDFOR
\STATE \textbf{return} $v_{\psi}(\cdot)$, $\pi_{E,\theta}(\cdot)$
\end{algorithmic}
\end{algorithm}

The function approximation $\pi_{E, \theta}$ should have a low sample complexity to quickly sample actions at each time step.
$\pi_{E, \theta}$ is desired to have interpolation/extrapolation ability across conditions to help accelerate the evaluation process by ensuring that two conditional distributions are close if the distance between their conditioned values is close.
Considering that the additive modification of the target density function in \eqref{eq:NF_point_target} results in a Gaussian mixture model with an increasing mixture size, $\pi_{E, \theta}$ is also desired to be flexible enough to model multi-mode distributions. 
In practice, we use the conditional Masked Autoregressive Flow model (cMAF) model \cite{papamakarios2017masked} to represent $\pi_E(a_E|a_A, s)$ to meet previous desiderata.

\textbf{a) Use cMAF to Represent $\pi_{E,\theta}$.} We obtain a closed-form parametric representation of $\pi_E(a_E|a_A, s)$ using a cMAF. A cMAF is a type conditional Normalizing Flow (cNF) \cite{papamakarios2017masked}, which is a generative model that uses invertible mappings to transform a simple probability distribution into a complex one conditioned on other random variables.
Compared with sample-based representation approaches such as MCMC, cNF directly generates one sample by calling one reversible path 
and has the capacity to model distributions that go beyond single-mode Gaussian distributions. 

We denote the generative function in cNFs as $g_{\theta}: z \rightarrow a_E$, and the differentiable normalizing function as $f_{\theta} = g^{-1}_{\theta}$.
The base distribution $p_Z$ of hidden variable $Z$ is easy to sample from, such as a normal distribution or a uniform distribution. At step $n$, we hope to minimize the KL divergence $L_{\theta}^{(n)}$ between the target and the cNF distribution.
\begin{align}
    L_{\theta}^{(n)} = & \mathbb{E}_{\pi_{E, \theta}(a_E | C_n)} [ \log \pi_{E, \theta}(a_E | C_n) - \log p_E(a_E|C_n)] \nonumber\\
    =  & \mathbb{E}_{p_{Z}(z | C_n)} [ \log \pi_{E, \theta}(g_{\theta}(z) | C_n) - \log p_E(g_{\theta}(z)|C_n)] \nonumber  \\  
    = & \mathbb{E}_{p_{Z}(z | C_n)} [\log p_{Z}(z | C_n) - \log |\det \partial_z g_{\theta}(z)| \nonumber \\
    & - \log p_E(g_{\theta}(z)|C_n)].
    \label{eq:NF_loss}
\end{align}
We update $\theta$ via gradient descent. We first sample from the simple base distribution $p_{Z}(z | C_n)$ and pass the sampled $z_i, i=[M]$ to the flow to get samples in target domain $a_{E,i} = g_{\theta}(z_i), i=[M]$. The KL divergence $L_{\theta}^{(n)}$ is then approximated with the sample mean. With learning rate $\alpha_{\pi}$,
\begin{align}
    \theta  \leftarrow & \theta - \alpha_{\pi} \nabla_{\theta} \sum_{i=1}^M  \Big[ \log p_{Z}(z_i | C_n) \nonumber \\
    &- \log |\det \partial_{z} g_{\theta}(z_i) |  - \log p_E(g_{\theta}(z_i)|C_n) \Big].
    \label{eq:NF_sgd}
\end{align}

\begin{algorithm}[tb]
\caption{Update $v_{\psi}$ with Two-timescale}
\label{alg:algorithmAPEV}
\textbf{Input}: new data $d$, current $\pi_{E, \theta}$\\
\textbf{Parameters}: $v_{\psi}$ learning rate $\alpha_v$, $\psi$ update interval $n_\psi$ \\
\textbf{Output}: $v_{\psi}$
\begin{algorithmic}[1] 
\STATE Calculate TD target $v_{\psi, TD}(s)$ of $d$ with (\ref{eq:scalable_TD_target})
\STATE Append $(s, v_{\psi, TD}(s))$ to $\mathcal{D}_{GP}$
\IF{ $n \% n_\psi = 0$ }
\STATE Train value function $v_\psi$ with gradient descent (\ref{eq:J_gp_batch_gd})
\ENDIF
\end{algorithmic}
\end{algorithm}

\begin{algorithm}[tb]
\caption{Update $\pi_{E, \theta}$ with Dense Rewards}
\label{alg:algorithmAPENF}
\textbf{Input}: new data $d$, current $v_{\psi}$, episode buffer $\mathcal{D}_e$\\
\textbf{Parameters}: $\pi_{E, \theta}$ learning rate $\alpha_{\pi}$, dense reward discount factor $\gamma_r$, momentum coefficient $\beta_\theta$ \\
\textbf{Output}: $\pi_{E,\theta}$
\begin{algorithmic}[1] 
\STATE Append $d$ to $\mathcal{D}_e$
\IF{ $r(s, s^\prime)=1$}
\STATE Calculate adversary policy target with (\ref{eq:NF_target}) based on episode buffer $\mathcal{D}_e$ and dense rewards 
\STATE Get updated $\theta^\prime$ of $\pi_{E, \theta}(\cdot)$ with gradient descent (\ref{eq:NF_sgd})
\STATE $ \theta \leftarrow \beta_\theta \theta^\prime + (1-\beta_\theta) \theta$
\ENDIF
\end{algorithmic}
\end{algorithm}

\subsection{Algorithm Description} 
We present the APE\_scalable algorithm in Algo.~\ref{alg:algorithmAPE} and summarize the algorithm highlights as follows:

\textbf{Warm start.} We initialize the environment policy $\pi_{E}$ with a warm start policy $\hat{\pi}_E$ when the rare event probability $v^\star(s_0)$ is pretty small. The design of the warm start distribution may leverage domain knowledge. One possible choice is using large variances. 

\textbf{GP two-timescale update rules.} 
We develop a two-timescale updating scheme for GP dynamic programming. We update data pairs more frequently than the GP kernel parameters to improve training stability as in Algo.~\ref{alg:algorithmAPEV}.
We store representative data as in sparse GP \cite{tran2015variational} and early-stop appending data when sufficient representative data points.

\textbf{cMAF update with dense rewards.} 
In practice, purely updating $\pi_{\theta}$ in an online manner raises several issues. First, it causes that cMAF easily over-fits to the latest target distribution across conditions (diminishing the effect of conditions). Second, it may induce training instability due to a quite sparse reward signal $r(s_{n-1}, s_{n})$. Third, it requires high computational complexity for frequently updating cMAF. Therefore, we propagate the reward signal to the whole episode $\hat{r}(s_{n-1}, s_n) = \gamma_r^{\tau-n} r(s_{\tau-1}, s_\tau), n = 1, \dots, \tau,$
where $\gamma_r$ is the reward discount factor. $\tau$ is the episode termination step. To improve training stability, we further add momentum to the parameter $\theta$. We early-stop training when achieving satisfactory sampled rare event rate.



\section{Experiments}
\label{sec:experiments}
In this section, We empirically show that the proposed APE\_scalable method \emph{with function approximations} outperforms baselines by estimating the rare event probability with smaller biases and orders of magnitude fewer samples. 

\vspace{-0.05in}
\subsection{Baselines}
\vspace{-0.05in}
We compare the proposed APE\_scalable with three baselines: \emph{APE\_discrete}, \emph{CEM\_discrete} and vanilla \emph{MC}. APE$\_$discrete and CEM$\_$discrete discretize each dimension of $s$ and $a_E$ with a small slot interval, and use dictionaries to store the rare event probability estimate and the environment policy $\pi_E$ at each slot. Note that vanilla MC does not require $\pi_E$ for estimation or store dictionaries. We get the ground truth rare event probability $v^{\star}(s_0)$ via MC with sufficiently large number of episodes. 
Let $\hat{v}(s_0)$ denote the estimation and $\bar{v}(s_0)$ denote the sampled rare-event rate along estimation process.
We evaluate the estimation error based on bias $Bias(s_0) = \mathbb{E}[\hat{v}(s_0)] - v^\star(s_0)$, standard deviation $Std(s_0) = \sqrt{\mathbb{E}[ (\mathbb{E}[\hat{v}(s_0)] - \hat{v}(s_0))^2 ]}$, and risk $MSE = \sqrt{Bias^2(s_0) + Std^2(s_0)}$.

\begin{figure}[t]
    \centering
    \includegraphics[width=\linewidth]{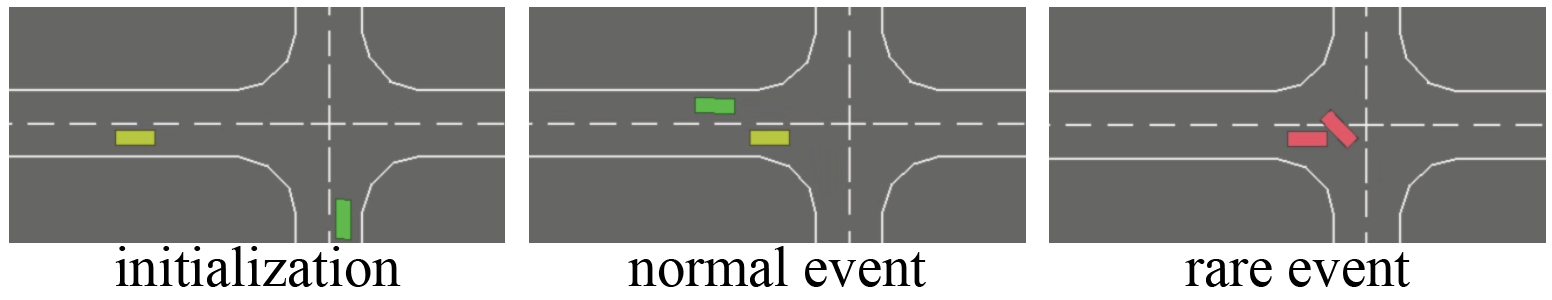} 
    \caption{The \textbf{Intersection} environment with rare events. 
    \vspace{-0.1in}
    }
    \label{fig:highway_env}
\end{figure}

\begin{figure}[t]
    \centering
    \includegraphics[width=\linewidth]{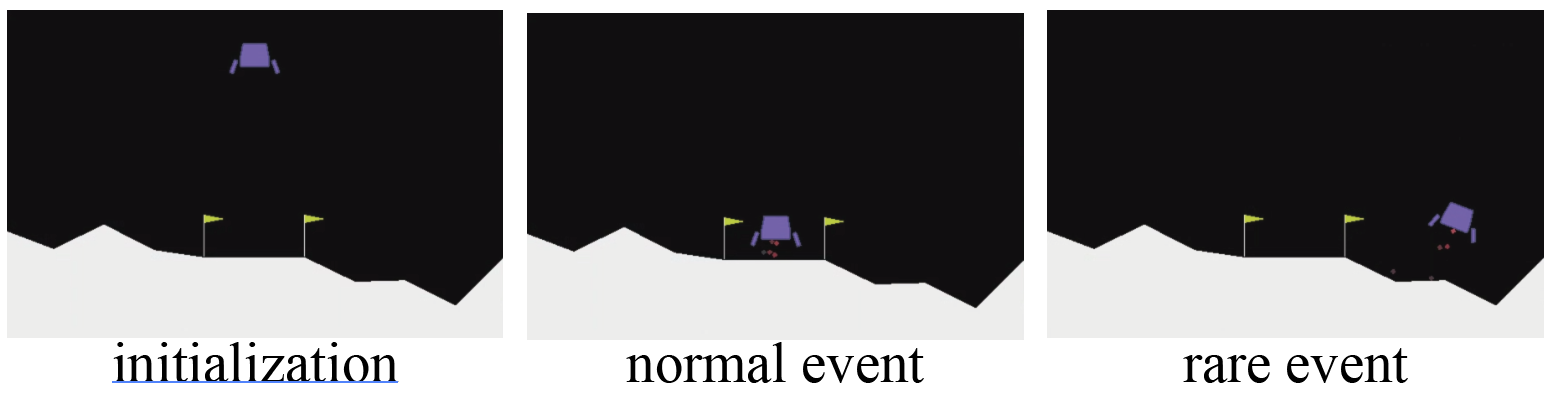} 
    \vspace{-0.1in}
    \caption{The \textbf{LunarLander} environment with rare events. 
    \vspace{-0.2in}
    }
    \label{fig:lunarlander_env}
\end{figure}

\begin{figure*}[ht]
\begin{subfigure}[]{0.333\linewidth}
  \centering
  \includegraphics[width=1.01\linewidth]{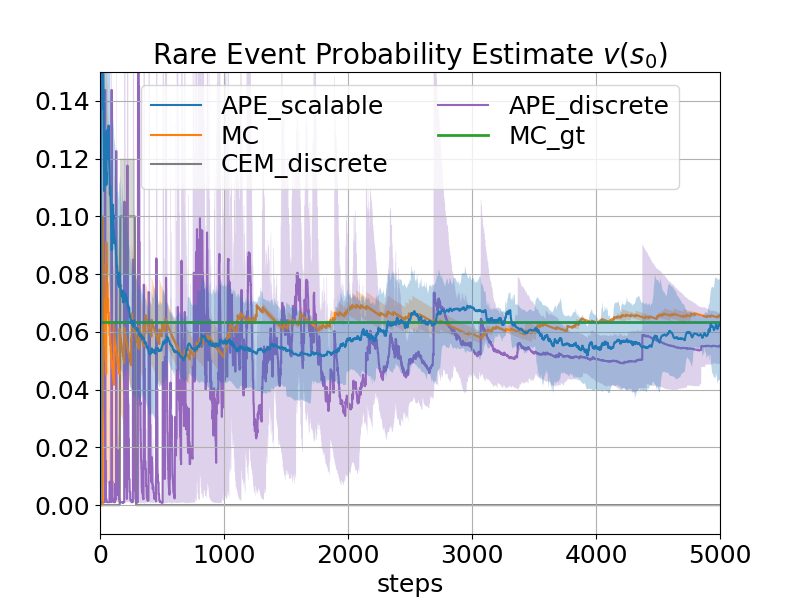}
\end{subfigure}
\hspace{-1.1em}%
\begin{subfigure}[]{0.333\linewidth}
  \centering
  \includegraphics[width=1.01\linewidth]{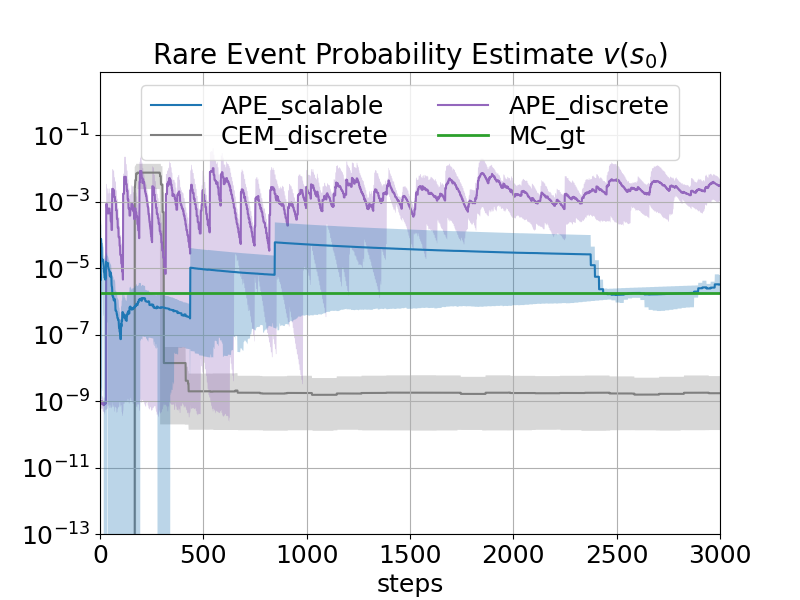}
\end{subfigure}
\hspace{-1.1em}%
\begin{subfigure}[]{0.333\linewidth}
  \centering
  \includegraphics[width=1.01\linewidth]{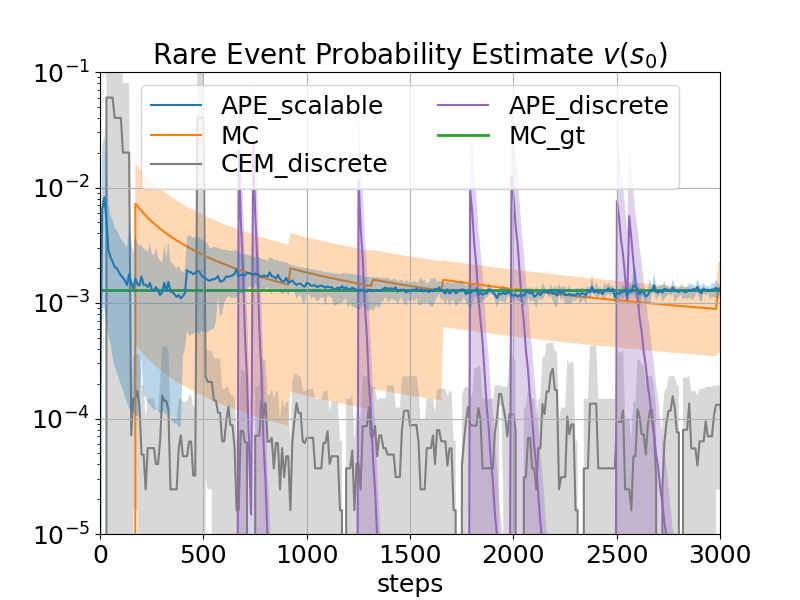}
\end{subfigure}


\begin{subfigure}[]{0.333\linewidth}
  \centering
  \includegraphics[width=1.01\linewidth]{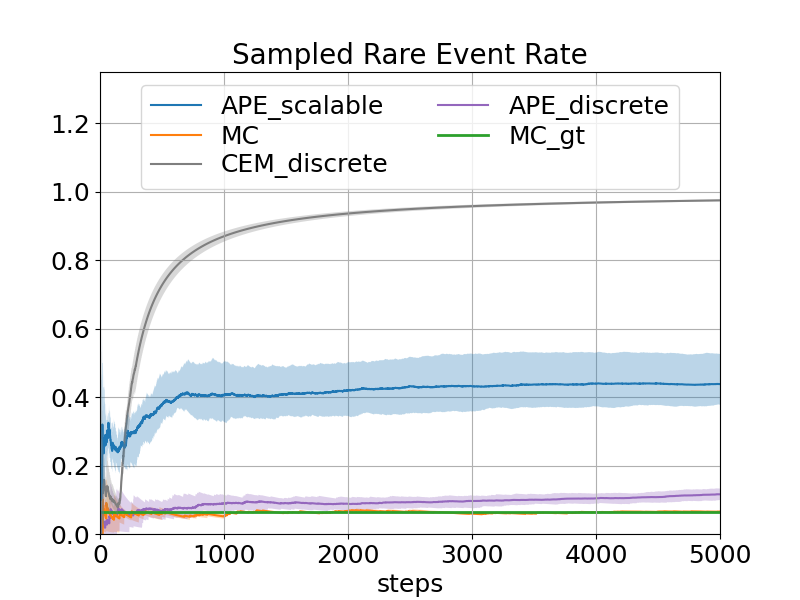}
  \caption{
  \centering $\mathtt{Int}$-$\mathtt{v0}:$
  $v^\star(s_0) = 6.34\times 10^{-2}$}
  \label{fig:intersection-v0}
\end{subfigure}
\hspace{-1.1em}%
\begin{subfigure}[]{0.333\linewidth}
  \centering
  \includegraphics[width=1.01\linewidth]{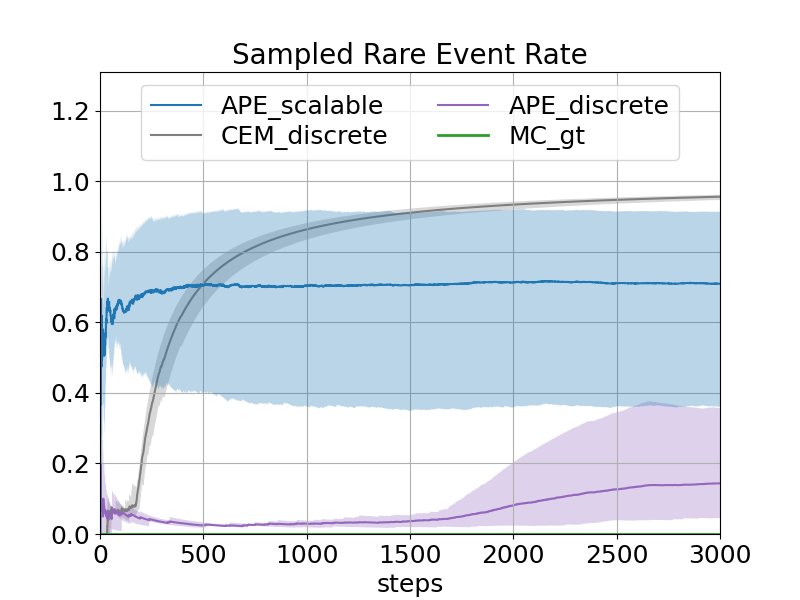}
  \caption{
  \centering $\mathtt{Int}$-$\mathtt{v1}:$ 
  $v^\star(s_0) = 1.70\times 10^{-6}$}
  \label{fig:intersection-v1}
\end{subfigure}
\hspace{-1.1em}%
\begin{subfigure}[]{0.333\linewidth}
  \centering
  \includegraphics[width=1.01\linewidth]{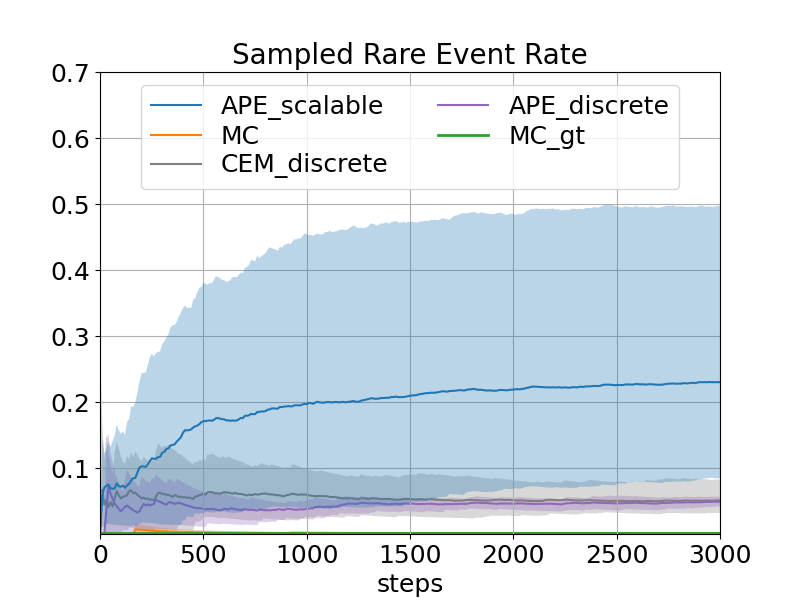}
  \caption{\centering $\mathtt{LunarLander}:$     
  $v^\star(s_0) = 1.30\times10^{-3}$}
  \label{fig:lunarlander}
\end{subfigure}
\vspace{-0.07in}
\caption{Evaluation results in $\mathtt{Int}$-$\mathtt{v0}$, $\mathtt{Int}$-$\mathtt{v1}$ and $\mathtt{LunarLander}$. Each line is run with 5 random seeds. The shaded area has lower and upper bound corresponding to 5\% and 95\% quantile. APE$\_$scalable performs well in moderate high-dimensional tasks with higher estimate accuracies and sampled rare event rates compared with baselines.
The performances of all the methods are measured with estimate errors and sampled rare event rates.
\vspace{-0.1in}
}
\label{fig:exp_plots}
\end{figure*}

\begin{table*}[t]
    \centering
    \caption{Numerical evaluation results in $\mathtt{Int}$-$\mathtt{v0}$, $\mathtt{Int}$-$\mathtt{v1}$ and $\mathtt{LunarLander}$. Each number is evaluated with 5 random seeds. The best performance (the smallest number in each column) is highlighted in bold.
    }
    \begin{tabular}{c|ccc|ccc|ccc}
         Environment & \multicolumn{3}{c|}{Int-v0 (Unit $10^{-2}$)}  & \multicolumn{3}{c|}{Int-v1 (Unit $10^{-6}$)} & \multicolumn{3}{c}{LunarLander (Unit $10^{-3}$)} \\ \hline
         Metrics & $Bias$ & $Std$ & $MSE$ & $Bias$ & $Std$ & $MSE$ & $Bias$ & $Std$ & $MSE$ \\ \hline
        APE\_scalable & \textbf{-0.07} & 6.27 & 6.27 & \textbf{1.08} & 2.88 & 3.07 & \textbf{-0.02} & 1.28 & 1.28 \\\hline
        APE\_discrete & -0.75 & 0.47 & \textbf{0.88} &  2447.75 & 149.44 & 2452.31 & -1.30 & \textbf{0.00} & 1.16\\\hline
        CEM\_discrete & -6.34 & \textbf{0.00} & 6.34 & -1.70 & \textbf{0.00} & \textbf{1.70} &  -1.17 & 0.07 & 1.27 \\\hline
        MC & 0.23 & 6.57 & 6.57 & -1.70 & \textbf{0.00} & \textbf{1.70} & 0.08 & 0.42 & 0.43 \\\hline
    \end{tabular}
    \vspace{-0.15in}
    \label{tab:exp_results}
\end{table*}

\begin{figure}[]
\centering
\includegraphics[width=0.9\linewidth]{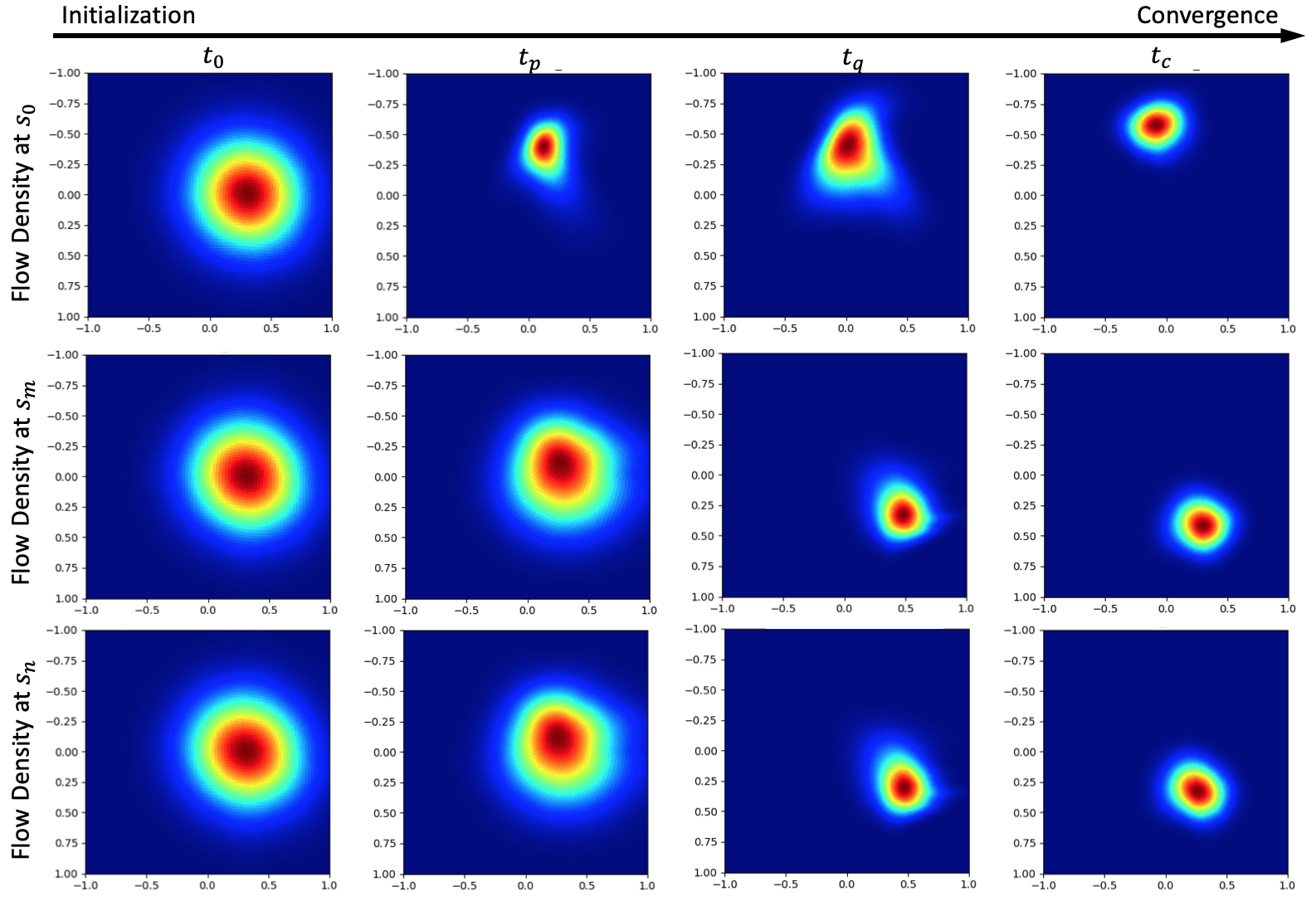}
\caption{The evolution of $\pi_E$ in $\mathtt{Int}$-$\mathtt{v1}$ conditioned at the initial state $s_0$, a random intermediate state $s_m$ and a state before collision $s_n$. The time steps $t_0 < t_p < t_q < t_c$.
\vspace{-0.2in}
}
\label{fig:evolution_pi_E_intersection}
\end{figure}

\vspace{-0.02in}

\subsection{Environments}
\vspace{-0.05in}
\textbf{Intersection}
The agent to evaluate (the green vehicle) aims to perform a collision-free unprotected left turn in a two-lane intersection with one surrounding vehicle (the yellow vehicle). A rare event is a crash, and the adversary policy is the surrounding vehicle's policy. We hope to evaluate a fixed deterministic policy $\pi_A$, which follows the lane with constant velocity. Note that APE is also applicable to stochastic policies.
We build two intersection scenarios based on highway-env \cite{highway-env}: $\mathtt{Int}$-$\mathtt{v0}$ with $v^\star(s_0) =6.34\times 10^{-2}$, and $\mathtt{Int}$-$\mathtt{v1}$ with $v^\star(s_0) =1.70 \times 10^{-6}$.
APE can handle more than two agents by treating all the other agents as part of the environment \cite{albrecht2018autonomous}.
In $\mathtt{Int}$-$\mathtt{v0}$, we initialize all methods with the $\pi_{E, gt}$ and present experiment results in Fig.~\ref{fig:exp_plots} (a). In $\mathtt{Int}$-$\mathtt{v1}$, we initialize APE\_scalable, APE\_discrete and CEM\_discrete with a slightly skewed distribution (as shown in the first column in Fig.~\ref{fig:evolution_pi_E_intersection}) with a larger variance than the ground truth $\pi_{E, gt}$ to sample rare events at the initial stage. We present experiment results of intersections in Fig.~\ref{fig:exp_plots} (b).

\textbf{Lunarlander}
We hope to evaluate a heuristic policy by OpenAI which aims to land a lunar lander on a fixed landing pad in $\mathtt{LunarLander}$ \cite{brockman2016openai} with $v^\star(s_0) = 1.30\times10^{-3}$.
The rare event is defined as if the lander crashes, flies too far away from the landing pad or the episode exceeds the maximum length. 
The environment agent is an adversary that controls perturbations over actions to execute to mimic controller errors.
This example provides a clue for selecting factors controlled by the environment adversary, and the experimenters may draw inspiration from robust learning and add uncertainty/noise to components of MDPs \cite{tessler2019action, zhang2020robust2, xu2012distributionally}.
In $\mathtt{LunarLander}$, we initialize all methods with the $\pi_{E, gt}$ and present experiment results in Fig.~\ref{fig:exp_plots} (c).

\vspace{-0.02in}

\subsection{Discussion about Evaluation Performance}
\vspace{-0.05in}
Although APE$\_$discrete approaches the true rare event probability in $\mathtt{Int}$-$\mathtt{v0}$, it has a larger estimate variance than APE$\_$scalable. In $\mathtt{Int}$-$\mathtt{v1}$ and $\mathtt{LunarLander}$, APE$\_$scalable successfully converges to the true value while APE$\_$discrete does not. 
The smaller variance and faster convergence of APE$\_$scalable compared with APE$\_$discrete across environments indicate that the discretization of continuous tasks drops the environment or action structure information and leads to biased evaluation results. 

Interestingly, CEM$\_$discrete achieves higher rare event sample rates than APE$\_$scalable in $\mathtt{Int}$-$\mathtt{v0}$ and $\mathtt{Int}$-$\mathtt{v1}$, but heavily underestimates the true rare event probability. 
We conjecture that CEM$\_$discrete fails to explore enough and its environment adversary policy $\pi_E$ only concentrates on a subset of failure modes. While in $\mathtt{LunarLander}$, CEM\_discrete increases the sampled rare event rate but is less efficient compared with APE\_scalable.

Both APE$\_$scalable and vanilla MC approach the true value in $\mathtt{Int}$-$\mathtt{v0}$ and $\mathtt{LunarLander}$, since the rare event probability is relatively large. Despite of that, APE still outperforms MC in terms of the variance of probability estimation in $\mathtt{LunarLander}$. But in $\mathtt{Int}$-$\mathtt{v1}$, APE$\_$scalable converges within 3,000 steps (around 1,000 episodes) while MC may require $1,762,193 = \log(0.05)/\log(1-v^\star(s_0))$ episodes until having 95$\%$ confidence interval. 
Note that we omit the evaluation curve using MC in $\mathtt{Int}$-$\mathtt{v1}$ since MC hardly samples even one rare event within 3,000 steps.
It shows that APE$\_$scalable dominates MC in estimation accuracy when the rare event probability is small.

\vspace{-0.05in}
\subsection{Discussion about Different Metrics}
\vspace{-0.05in}
From Tab.~\ref{tab:exp_results}, APE\_scalable has the smallest bias across three environments compared to baselines but not the smallest variances or MSEs. Note that although in $\mathtt{Int}$-$\mathtt{v0}$ and  $\mathtt{Int}$-$\mathtt{v1}$ CEM\_discrete has near-zero variance, it heavily under-estimates the rare event probability as shown in Fig.~\ref{fig:exp_plots}. 
Similarly, in $\mathtt{LunarLander}$, APE\_discrete underestimates the rare event probability by one magnitude but has a smaller variance than other baselines. 
Such under-estimation phenomena are risky and not desirable in the rare event setting since they may cause catastrophic consequences by deploying overly trusted RL agents.

\vspace{-0.05in}
\subsection{Discussion about Environment Adversary $\pi_E$}
\vspace{-0.05in}
In $\mathtt{Int}$-$\mathtt{v0}$, the sampled rare event rate using APE$\_$scalable (0.44) is 7.5 times greater than MC (0.064), and is 3.3 times greater than APE$\_$discrete (0.12). In $\mathtt{LunarLander}$, the final sampled rare event rate using APE\_scalable is more than 100 times greater than MC. 
Interestingly, in the more challenging $\mathtt{Int}$-$\mathtt{v1}$, APE$\_$scalable generates much more rare events with sampled rare event rate 0.75, which is 441,176 times greater than MC ($1.70 \times 10^{-6}$) and 4.7 times greater than APE$\_$discrete (0.16).
Together with the smaller estimation biases discussed before, the learnt policy $\pi_E$ with APE$\_$scalable is closer to the zero-variance distribution, and APE$\_$scalable performs better with smaller $v^\star(s_0)$.

We notice that the environment adversary policies at convergence with different random seeds hold a similar interpretable behavior mode, one example of which is visualized in Fig.~\ref{fig:evolution_pi_E_intersection} for $\mathtt{Int}$-$\mathtt{v1}$. We see that the learned adversary policy $\pi_E$ tends to accelerate (being aggressive) at the beginning of each episode and keep the original behavior $\pi_{E, gt}$ (being conservative) in the remaining steps.

\vspace{-0.02in}

\section{Conclusion}
\vspace{-0.05in}
Motivated by the limitations of existing policy evaluation methods facing rare events, we propose APE, which scales to complex tasks by drawing from powerful function approximators and explores rare events caused by sequential interactions via learning an adversary with adaptive importance sampling. We demonstrate the effectiveness of APE with its orders of magnitude sample savings to achieve convergence. APE provides a fundamental tool to allow the evaluation, and subsequent deployment, of intelligent agents in safety-critical systems that are otherwise computationally prohibitive.
It also increases policy interpretability by uncovering agent behaviors in extreme cases (rare events). 


\vspace{-0.02in}

The scalability of APE in evaluating rare events in sequential environments opens up two important future directions. First is the training of rare-event-aware intelligent agents. This requires a minimax training mechanism to optimize the agent’s strategies against the avoidance of rare events, the latter captured via the environment adversary learned from adaptive importance sampling as in this paper. Second is more powerful update methods for the value function and importance distribution, such as incorporating (Bayesian) uncertainty estimates in the GP regression model and using multi-step TD.







\section*{ACKNOWLEDGMENT}

The authors gratefully acknowledge the support from the National Science Foundation (under grants IIS-1849280 and IIS-1849304), and unrestricted research grant from the Toyota Motor North America. The ideas, opinions and conclusions presented in this paper are solely those of the authors.



\bibliographystyle{IEEEtran} 
\bibliography{ape}

\begin{thebibliography}{10}
\providecommand{\url}[1]{#1}
\csname url@rmstyle\endcsname
\providecommand{\newblock}{\relax}
\providecommand{\bibinfo}[2]{#2}
\providecommand\BIBentrySTDinterwordspacing{\spaceskip=0pt\relax}
\providecommand\BIBentryALTinterwordstretchfactor{4}
\providecommand\BIBentryALTinterwordspacing{\spaceskip=\fontdimen2\font plus
\BIBentryALTinterwordstretchfactor\fontdimen3\font minus
  \fontdimen4\font\relax}
\providecommand\BIBforeignlanguage[2]{{%
\expandafter\ifx\csname l@#1\endcsname\relax
\typeout{** WARNING: IEEEtran.bst: No hyphenation pattern has been}%
\typeout{** loaded for the language `#1'. Using the pattern for}%
\typeout{** the default language instead.}%
\else
\language=\csname l@#1\endcsname
\fi
#2}}

\bibitem{kalra2016driving}
N.~Kalra and S.~M. Paddock, ``Driving to safety: How many miles of driving
  would it take to demonstrate autonomous vehicle reliability?''
  \emph{Transportation Research Part A: Policy and Practice}, vol.~94, pp.
  182--193, 2016.

\bibitem{chen2021context}
B.~Chen, Z.~Liu, J.~Zhu, M.~Xu, W.~Ding, L.~Li, and D.~Zhao, ``Context-aware
  safe reinforcement learning for non-stationary environments,'' in \emph{2021
  IEEE International Conference on Robotics and Automation (ICRA)}, 2021, pp.
  10\,689--10\,695.

\bibitem{rubinstein2016simulation}
R.~Y. Rubinstein and D.~P. Kroese, \emph{Simulation and the Monte Carlo
  method}.\hskip 1em plus 0.5em minus 0.4em\relax John Wiley \& Sons, 2016,
  vol.~10.

\bibitem{zhao2016accelerated}
D.~Zhao, H.~Lam, H.~Peng, S.~Bao, D.~J. LeBlanc, K.~Nobukawa, and C.~S. Pan,
  ``Accelerated evaluation of automated vehicles safety in lane-change
  scenarios based on importance sampling techniques,'' \emph{IEEE Transactions
  on Intelligent Transportation Systems}, vol.~18, no.~3, pp. 595--607, 2017.

\bibitem{corso2020scalable}
A.~Corso, R.~Lee, and M.~J. Kochenderfer, ``Scalable autonomous vehicle safety
  validation through dynamic programming and scene decomposition,'' \emph{arXiv
  preprint arXiv:2004.06801}, 2020.

\bibitem{uesato2018rigorous}
J.~Uesato, A.~Kumar, C.~Szepesvari, T.~Erez, A.~Ruderman, K.~Anderson,
  N.~Heess, P.~Kohli, \emph{et~al.}, ``Rigorous agent evaluation: An
  adversarial approach to uncover catastrophic failures,'' \emph{arXiv preprint
  arXiv:1812.01647}, 2018.

\bibitem{o2018scalable}
M.~O'Kelly, A.~Sinha, H.~Namkoong, R.~Tedrake, and J.~C. Duchi, ``Scalable
  end-to-end autonomous vehicle testing via rare-event simulation,'' in
  \emph{Advances in Neural Information Processing Systems}, 2018, pp.
  9827--9838.

\bibitem{desai2001markov}
P.~Y. Desai and P.~W. Glynn, ``A markov chain perspective on adaptive monte
  carlo algorithms,'' in \emph{Proceeding of the 2001 Winter Simulation
  Conference (Cat. No. 01CH37304)}, vol.~1.\hskip 1em plus 0.5em minus
  0.4em\relax IEEE, 2001, pp. 379--384.

\bibitem{ahamed2006adaptive}
T.~I. Ahamed, V.~S. Borkar, and S.~Juneja, ``Adaptive importance sampling
  technique for markov chains using stochastic approximation,''
  \emph{Operations Research}, vol.~54, no.~3, pp. 489--504, 2006.

\bibitem{dennis2020emergent}
M.~Dennis, N.~Jaques, E.~Vinitsky, A.~Bayen, S.~Russell, A.~Critch, and
  S.~Levine, ``Emergent complexity and zero-shot transfer via unsupervised
  environment design,'' \emph{arXiv preprint arXiv:2012.02096}, 2020.

\bibitem{rasmussen2003gaussian}
C.~E. Rasmussen, ``Gaussian processes in machine learning,'' in \emph{Summer
  School on Machine Learning}.\hskip 1em plus 0.5em minus 0.4em\relax Springer,
  2003, pp. 63--71.

\bibitem{papamakarios2017masked}
G.~Papamakarios, T.~Pavlakou, and I.~Murray, ``Masked autoregressive flow for
  density estimation,'' in \emph{Advances in Neural Information Processing
  Systems}, 2017, pp. 2338--2347.

\bibitem{highway-env}
E.~Leurent, ``An environment for autonomous driving decision-making,''
  \url{https://github.com/eleurent/highway-env}, 2018.

\bibitem{brockman2016openai}
G.~Brockman, V.~Cheung, L.~Pettersson, J.~Schneider, J.~Schulman, J.~Tang, and
  W.~Zaremba, ``Openai gym,'' \emph{arXiv preprint arXiv:1606.01540}, 2016.

\bibitem{zhao2017accelerated}
D.~Zhao, X.~Huang, H.~Peng, H.~Lam, and D.~J. LeBlanc, ``Accelerated evaluation
  of automated vehicles in car-following maneuvers,'' \emph{IEEE Transactions
  on Intelligent Transportation Systems}, vol.~19, no.~3, pp. 733--744, 2017.

\bibitem{arief2020deep}
M.~Arief, Z.~Huang, G.~K.~S. Kumar, Y.~Bai, S.~He, W.~Ding, H.~Lam, and
  D.~Zhao, ``Deep probabilistic accelerated evaluation: A certifiable
  rare-event simulation methodology for black-box autonomy,'' \emph{arXiv
  preprint arXiv:2006.15722}, 2020.

\bibitem{cerou2007adaptive}
F.~C{\'e}rou and A.~Guyader, ``Adaptive multilevel splitting for rare event
  analysis,'' \emph{Stochastic Analysis and Applications}, vol.~25, no.~2, pp.
  417--443, 2007.

\bibitem{ding2021multimodal}
W.~Ding, B.~Chen, B.~Li, K.~J. Eun, and D.~Zhao, ``Multimodal safety-critical
  scenarios generation for decision-making algorithms evaluation,'' \emph{IEEE
  Robotics and Automation Letters}, vol.~6, no.~2, pp. 1551--1558, 2021.

\bibitem{corso2020survey}
A.~Corso, R.~J. Moss, M.~Koren, R.~Lee, and M.~J. Kochenderfer, ``A survey of
  algorithms for black-box safety validation,'' \emph{arXiv preprint
  arXiv:2005.02979}, 2020.

\bibitem{werpachowski2019detecting}
R.~Werpachowski, A.~Gy{\"o}rgy, and C.~Szepesv{\'a}ri, ``Detecting overfitting
  via adversarial examples,'' \emph{arXiv preprint arXiv:1903.02380}, 2019.

\bibitem{eykholt2018robust}
K.~Eykholt, I.~Evtimov, E.~Fernandes, B.~Li, A.~Rahmati, C.~Xiao, A.~Prakash,
  T.~Kohno, and D.~Song, ``Robust physical-world attacks on deep learning
  visual classification,'' in \emph{Proceedings of the IEEE Conference on
  Computer Vision and Pattern Recognition}, 2018, pp. 1625--1634.

\bibitem{xie2019towards}
T.~Xie, Y.~Ma, and Y.-X. Wang, ``Towards optimal off-policy evaluation for
  reinforcement learning with marginalized importance sampling,'' in
  \emph{Advances in Neural Information Processing Systems}, 2019, pp.
  9668--9678.

\bibitem{munos2016safe}
R.~Munos, T.~Stepleton, A.~Harutyunyan, and M.~Bellemare, ``Safe and efficient
  off-policy reinforcement learning,'' in \emph{Advances in Neural Information
  Processing Systems}, 2016, pp. 1054--1062.

\bibitem{espeholt2018impala}
L.~Espeholt, H.~Soyer, R.~Munos, K.~Simonyan, V.~Mnih, T.~Ward, Y.~Doron,
  V.~Firoiu, T.~Harley, I.~Dunning, \emph{et~al.}, ``Impala: Scalable
  distributed deep-rl with importance weighted actor-learner architectures,''
  \emph{arXiv preprint arXiv:1802.01561}, 2018.

\bibitem{schlegel2019importance}
M.~Schlegel, W.~Chung, D.~Graves, J.~Qian, and M.~White, ``Importance
  resampling for off-policy prediction,'' in \emph{Advances in Neural
  Information Processing Systems}, 2019, pp. 1799--1809.

\bibitem{cotta2019off}
R.~Cotta, M.~Hu, D.~Jiang, and P.~Liao, ``Off-policy evaluation of
  probabilistic identity data in lookalike modeling,'' in \emph{Proceedings of
  the Twelfth ACM International Conference on Web Search and Data Mining},
  2019, pp. 483--491.

\bibitem{asmussen2007stochastic}
S.~Asmussen and P.~W. Glynn, \emph{Stochastic simulation: algorithms and
  analysis}.\hskip 1em plus 0.5em minus 0.4em\relax Springer Science \&
  Business Media, 2007, vol.~57.

\bibitem{pinto2017robust}
L.~Pinto, J.~Davidson, R.~Sukthankar, and A.~Gupta, ``Robust adversarial
  reinforcement learning,'' in \emph{International Conference on Machine
  Learning}, 2017, pp. 2817--2826.

\bibitem{sutton2018reinforcement}
R.~S. Sutton and A.~G. Barto, \emph{Reinforcement learning: An
  introduction}.\hskip 1em plus 0.5em minus 0.4em\relax MIT press, 2018.

\bibitem{de2000analysis}
P.-T. de~Boer, ``Analysis and efficient simulation of queueing models of
  telecommunication systems,'' \emph{Doctoral dissertation. (Centre for
  Telematics and Information Technology University of Twente)}, 2000.

\bibitem{tsitsiklis1994asynchronous}
J.~N. Tsitsiklis, ``Asynchronous stochastic approximation and q-learning,''
  \emph{Machine learning}, vol.~16, no.~3, pp. 185--202, 1994.

\bibitem{gym_minigrid}
M.~Chevalier-Boisvert, L.~Willems, and S.~Pal, ``Minimalistic gridworld
  environment for openai gym,'' \url{https://github.com/maximecb/gym-minigrid},
  2018.

\bibitem{mnih2013playing}
V.~Mnih, K.~Kavukcuoglu, D.~Silver, A.~Graves, I.~Antonoglou, D.~Wierstra, and
  M.~Riedmiller, ``Playing atari with deep reinforcement learning,''
  \emph{arXiv preprint arXiv:1312.5602}, 2013.

\bibitem{krawczyk2016learning}
B.~Krawczyk, ``Learning from imbalanced data: open challenges and future
  directions,'' \emph{Progress in Artificial Intelligence}, vol.~5, no.~4, pp.
  221--232, 2016.

\bibitem{chrysakis2020online}
A.~Chrysakis and M.-F. Moens, ``Online continual learning from imbalanced
  data,'' in \emph{International Conference on Machine Learning}.\hskip 1em
  plus 0.5em minus 0.4em\relax PMLR, 2020, pp. 1952--1961.

\bibitem{lee2017deep}
J.~Lee, Y.~Bahri, R.~Novak, S.~S. Schoenholz, J.~Pennington, and
  J.~Sohl-Dickstein, ``Deep neural networks as gaussian processes,''
  \emph{arXiv preprint arXiv:1711.00165}, 2017.

\bibitem{nguyen2008local}
D.~Nguyen-Tuong, J.~Peters, and M.~Seeger, ``Local gaussian process regression
  for real time online model learning,'' \emph{Advances in neural information
  processing systems}, vol.~21, pp. 1193--1200, 2008.

\bibitem{deisenroth2008approximate}
M.~P. Deisenroth, J.~Peters, and C.~E. Rasmussen, ``Approximate dynamic
  programming with gaussian processes,'' in \emph{2008 American Control
  Conference}.\hskip 1em plus 0.5em minus 0.4em\relax IEEE, 2008, pp.
  4480--4485.

\bibitem{tran2015variational}
D.~Tran, R.~Ranganath, and D.~M. Blei, ``The variational gaussian process,''
  \emph{arXiv preprint arXiv:1511.06499}, 2015.

\bibitem{albrecht2018autonomous}
S.~V. Albrecht and P.~Stone, ``Autonomous agents modelling other agents: A
  comprehensive survey and open problems,'' \emph{Artificial Intelligence},
  vol. 258, pp. 66--95, 2018.

\bibitem{tessler2019action}
C.~Tessler, Y.~Efroni, and S.~Mannor, ``Action robust reinforcement learning
  and applications in continuous control,'' in \emph{International Conference
  on Machine Learning}.\hskip 1em plus 0.5em minus 0.4em\relax PMLR, 2019, pp.
  6215--6224.

\bibitem{zhang2020robust2}
H.~Zhang, H.~Chen, C.~Xiao, B.~Li, M.~Liu, D.~Boning, and C.-J. Hsieh, ``Robust
  deep reinforcement learning against adversarial perturbations on state
  observations,'' \emph{arXiv preprint arXiv:2003.08938}, 2020.

\bibitem{xu2012distributionally}
H.~Xu and S.~Mannor, ``Distributionally robust markov decision processes,''
  \emph{Mathematics of Operations Research}, vol.~37, no.~2, pp. 288--300,
  2012.

\end{thebibliography}

\end{document}


\maketitle
\thispagestyle{empty}
\pagestyle{empty}


\section{Environments}
\label{sec:appendix_env}
We study three practical examples to show the strength and generality of APE including the discrete gym-minigrid \cite{gym_minigrid}, highway-env \cite{highway-env} which is a multi-agent continuous environment and lunar-lander \cite{brockman2016openai} which aims to solve a single-agent control task.
We assume that the evaluation policy $\pi_A$ is mature as in most rare event literature so that rare events are naturally related to failure cases similar to most existing literature. 
There are other environments used in rare-event related literature, such as MuJoCo humanoid \cite{todorov2012mujoco} (rare events as falling down situations \cite{uesato2018rigorous}) and mountain-car \cite{brockman2016openai} (rare events as small perturbations to the initial velocity \cite{ciosek2017offer}). 
The proposed Accelerated Policy Evaluation (APE) method relies on experimenters to choose factors controlled by the environment agent.












\subsection{Single-agent environment: lunar-lander}

The meanings of 8-dimensional observation and 2-dimensional action in $\mathtt{LunarLander}$ are shown in Tab. \ref{tab: obs_act_lunarlander}. The agent controls the main engine and orientation engine.
The environment adversary adds perturbations into the agent action with frequency 1 FPS. The agent controls the lander at 50 FPS. During every 50-step period, a constant action noise is injected into the commended action $a_A$. 
In the original environment, the action noise is sampled i.i.d. from a 2-D Gaussian distribution centered at $0$ with a standard deviation $0.1$. 

\begin{table}[h]
  \caption{Observation and action space of $\mathtt{LunarLander}$}
  \label{tab: obs_act_lunarlander}
  \centering
  \begin{tabular}{lll}
    \toprule
    Num     & Observation     & min, max \\
    \midrule
    0     & horizontal coordinate  & $-1.5, 1.5$     \\
    1     & vertical coordinate & $-0.5, 2.0$      \\
    2     & horizontal speed       & $-1.5, 1.5$  \\
    3     & vertical speed       & $-1.5, 1.0$  \\
    4     & angle       & $-\pi, \pi$  \\
    5     & angular speed       & $-3.0, 3.0$  \\
    6     & if first leg has contact       & $\{0, 1\}$  \\
    7     & if second leg has contact       & $\{0, 1\}$  \\
    \midrule
    Num     & Action     & min, max \\
    \midrule
    0     & main engine power  & $-1.0, 1.0$     \\
    1     & orientation enginer power & $-1.0, 1.0$      \\

    \bottomrule
  \end{tabular}
\end{table}

\subsection{Multi-agent environment: highway-gym}

The $\mathtt{Int}$-$\mathtt{v0}$ and $\mathtt{Int}$-$\mathtt{v1}$ environments are modified based on highway-env~\cite{highway-env}. We consider a two-agent example where the environment agent controls the challenger vehicle's acceleration. 
The environment action $a_E$ is the coefficient of a linear controller for controlling the challenger vehicle's acceleration. $acc_E = a_E^T x_E$ where $x_E \in \mathbb{R}^{d_{x_E}}$ is the feature vector of interest.
More concretely, the action $a_E \sim \pi_E$ is a multi-dimensional vector with each element corresponding to the coefficient of a feature. 
Denote the positions and heading of a vehicle as $(x,y,h)$.
In $\mathtt{Int}$-$\mathtt{v0}$, the state is
\begin{align*}
s = (&x_{A}, y_{A}, \dot{x}_{A}, \dot{y}_{A}, \cos{h}_{A}, \sin{h}_{A}, \\
&x_{rel}, y_{rel}, \dot{x}_{rel}, \dot{y}_{rel}, \cos{h}_{rel}, \sin{h}_{rel}) \in \mathbb{R}^{12},
\end{align*}

which includes the kinematic information of the agent and the challenger vehicle evaluated in the world frame and in the body fixed frame attached to the agent. 
The state $s$ is within $[-1, 1]$. 
The environment adversary has a action $a_E \in \mathbb{R}^3$ with value in range $[-2,2]$.
In $\mathtt{Int}$-$\mathtt{v1}$, we simply the environment with states $s = (x_{rel}, y_{rel}) \in \mathbb{R}^2$ and adversary actions $a_E \in \mathbb{R}^2$.

\section{Baselines}
\label{sec:appendix_baselines}

We use a desktop PC with twelve 64-bit CPU (model: Intel(R) Core(TM) i7-8700K CPU @ 3.70GHz) and a GPU (model: NVIDIA GeForce RTX 2080 Ti) as the computing infrastructure. 
We implement cMAF based on the paper \cite{papamakarios2017masked} as well as the public package 
\cite{kamenbliznashki2020} and 
GP based on GPytorch \cite{gardner2018gpytorch}.
When selecting model hyperparameters for APE and baselines, we first randomly search in a coarse range and then do a grid search in a smaller hyperparameter space.
APE$\_$discrete and CEM$\_$discrete both divide each dimension of $s$ and $a_E$ with small enough slot interval, and use dictionaries to store the rare event probability estimate and the environment policy $\pi_E$.

\paragraph{Cross-entropy method (CEM) with discretization} We adopt another baseline, the cross-entropy method for Markov Chains \cite{de2000analysis, de2000adaptive} to our setting by treating the environment agent's policy $\pi_E$ as the importance distribution. CEM requires to sample $k$ episodes before updating the importance distribution $\pi_E^{(m)}$. 
At update iteration $m$, the rare event probability estimate and the updating rule for $\pi_E$ in discrete settings are as follows: 
\begin{align}
    v^{(m+1)} = \frac{1}{k}\sum_{j=1}^k L_j \cdot I_j(\tau),
\end{align}
\begin{align}
    \tilde{\pi}_E^{(m+1)}(a_{E}| a_{A}, s) & \leftarrow \text{max} \Big(  \delta_{CEM} , \nonumber \\ & \frac{\sum_{j=1}^k L_j \cdot N_{j}(s, a_A, a_E) \cdot I_j(\tau)}{\sum_{j=1}^k L_i \cdot N_{j}(s, a_A) \cdot I_j(\tau)}   \Big),
\end{align}
\begin{align}
L_j  &= \prod_{n=1}^{\tau} \rho_n(s_n, s_{n+1}, a_{A,n}) \nonumber \\
& = \prod_{n=1}^{\tau} \frac{ \pi_{E, gt}(a_{E,n} | a_{A,n}, s_n)}{ \pi_{E}^{(m)}(a_{E,n} | a_{A,n}, s_n) },
\end{align}
where $\tilde{\pi}_E^{(m+1)}(a_{E}| a_{A}, s)$ is the unnormalized target. $L_j$ is the likelihood of the episode $j$.  $\delta_{CEM}$ ensures the policy has positive values. $N_{j}(s, a_A, a_E)$ is the count of the environment selecting $a_E$ given condition $(s,a_A)$ along episode $j$. $N_{j}(s, a_A)$ is the count of the environment encountering condition $(s,a_A)$ along episode $j$.

\begin{figure}[h]
\begin{subfigure}[]{\columnwidth}
  \centering
  \includegraphics[width=0.81\linewidth]{figures/update_7114_vis_step_400.png}
\end{subfigure}
\newline

\begin{subfigure}[]{\columnwidth}
  \centering
  \includegraphics[width=0.81\linewidth]{figures/update_147_vis_step_400.png}
\end{subfigure}

\caption{Visualization of $\pi_E$ in the middle of the evaluation process. The distributions in each row have the same condition $(s, a_A)$. From left to right, the figures in each row present the ground truth distribution, the target distribution, the cMAF distribution after training and cMAF samples.
}
\label{fig:multimodal_cMAF}
\end{figure}

\section{Additional experiments}

\subsection{Multi-modality of adversary policy along evaluation}

In Fig.~\ref{fig:multimodal_cMAF}, we visualize the environment agent policy $\pi_E$ at two steps in $\mathtt{Intersection}$-$\mathtt{v0}$ to show that (1) the target density at each update step is composed of an increasing number of Gaussian distributions and (2) cMAF has the capacity to represent multi-modal distributions.

\section{Convergence analysis}

\label{sec:appendix_convergence_analysis}

\subsection{Convergence analysis for APE}

\begin{theorem} Assume tabular MDP settings with discrete state and action spaces and let $\pi_E^{\star}$ denote the zero-variance environment policy. If step size $\alpha_n$ satisfies $\sum_n \alpha_n = \infty$ and $\sum_n \alpha_n^2 < \infty$, the iterative updating rules of APE satisfies $v^{(n)}\rightarrow v^{\star}$ and $\pi_E^{(n)} \rightarrow \pi_E^{\star}$. 
\end{theorem}

\begin{proof}
We state the proof sketch as follows.

The proof of Theorem 1 directly follows the structure of Theorem 1 in \cite{ahamed2006adaptive}. Note that by defining the value function only related to state, the difference between APE value update rule and that in \cite{ahamed2006adaptive} lies in the definition of the importance weight. However, the importance weight-related terms in proof are included in a sequence of random vectors with zero mean and bounded norm and do not affect the contraction of the remaining part. 
Basedon Perron-Frobenius theorem, we can show that the remaining part is a contraction operator and thus has a fixed point. Then by applying Theorem 1 and Theorem 3 of \cite{tsitsiklis1994asynchronous}, we have the convergence of value function $v$. The convergence of $\pi_E$ results in the convergence of $v$ and the definition of $\pi_E$'s updating rule.



\end{proof}

\subsection{Convergence analysis for APE\_scalable}
Recall that the value function update rule is 
\begin{align}
   & v^{(n+1)}(s_n)  \leftarrow  (1-\alpha_n)v^{(n)}(s_n) +   \nonumber \\
    &  \sum_{a_{A} \in \mathcal{A}_A} \rho_n (s_n, s_{n+1}, a_{A})  \alpha_n [r(s_n, s_{n+1}) + v^{(n)}(s_{n+1})] \label{eq:ape_v_update_exact} \\
    & \approx (1-\alpha_n) v^{(n)}(s_n)+ \alpha_n \cdot \nonumber\\
    & \quad \ [r(s_n, s_{n+1}) + v^{(n)}(s_{n+1})] \cdot \rho_n (s_n, s_{n+1}, a_{A,n}), \label{eq:ape_v_update_approx}
\end{align}

where the importance weight is
\begin{align}
    \rho_n(s_n, s_{n+1}, a_{A,n}) & = \frac{ p_{E, gt}(s_{n+1} | a_{A,n}, s_n)}{ p_E^{(n)}(s_{n+1} | a_{A,n}, s_n) } \nonumber \\
    & =\frac{ \pi_{E, gt}(a_{E,n} | a_{A,n}, s_n)}{ \pi_{E}^{(n)}(a_{E,n} | a_{A,n}, s_n) }. \label{eq:ape_is_weight}
\end{align}

In this section, we examine the convergence properties/issues associated with APE. Since APE focuses on evaluate rare event probability under a fixed policy $\pi_A$ as in  \eqref{eq:rare_prob_definition}, we can view it as an expected cost evaluation problem for a Markov chain with a fixed transition probability. 

\begin{align}
    & v^\star(s_0) \coloneqq  \mathbb{E}_{p_{E, gt}, \pi_A} \Big[ \sum_{n=0}^{\tau-1} r(s_{n}, s_{n+1}) \Big] = \label{eq:rare_prob_definition}
    \\ &\sum_{a_A \in \mathcal{A}_A} \pi_A(a_A|s_0) \sum_{s_1 \in \mathcal{S} }p_{E,gt}(s_1 | s_0, a_A) \Big[ r(s_0, s_1) + v^\star(s_1) \Big],
    \nonumber
\end{align}

The shifting policy $\pi_E$ corresponds to changing the transition probability to accelerate the evaluation, particularly for rare events under the original transition probability. To be specific, we assume a Markov chain with state $\mathcal S$, transition kernel $(p(x,y), x,y\in \mathcal S)$ and definitions  on $\mathcal I, \mathcal T, \mathcal R, \tau, (r(x,y), x,y\in\mathcal S)$ as in the case for MDP in Section 3. We can view the problem as evaluating 
\begin{equation*}
    v^\star(x)=\mathbb E_{x,p}[\sum_{n=1}^{\tau} r(s_{n-1},s_{n})]
\end{equation*}
for $s_0\in\mathcal I$, while $v^\star$ is set to 0 on $\mathcal T$. 

Formally, in the case that $|\mathcal S|<\infty$, the update rule associated with \eqref{eq:ape_is_weight} and \eqref{eq:ape_v_update_approx}, in the Markov chain setting, is analogous to updating rules in \cite{ahamed2006adaptive}:
\begin{align*}
    &v^{(n+1)}(s_n)=(1-\alpha) v^{(n)}(s_n) + \alpha  \frac{p(x_n,x_{n+1})}{p^{(n)}(x_n,x_{n+1})}\cdot  \nonumber \\
    & \quad [r(s_n, s_{n+1}) + v^{(n)}(s_{n+1})]  \quad  \text{(value iteration)}\nonumber\\
    &\tilde{p}^{(n+1)}(x_n,x_{n+1})=\max\bigg(\delta , \ p(x_n,x_{n+1}) \cdot \\
    & \quad \Big(\frac{r(s_n,s_{n+1})+v^{(n+1)}(x_{n+1})}{v^{(n+1)}(s_n)}\Big)\bigg) \quad \text{(weight update)}\nonumber\\
    &{p}^{(n+1)}(x_n,x_{n+1})= \frac{\tilde{p}^{(n+1)}(s_n,y)}{\sum_{z\in \mathcal S}
    \tilde{p}^{(n+1)}(s_n,z)} \quad \text{(normalization)}
\end{align*}
where $\delta,\alpha>0$, $v^{n}$, $p^{n}$ are the $n$-th iteration of value evaluation and adaptive importance sampling used for simulation, respectively. The convergence (with probability 1) property of the above iterative algorithm is established in \cite{ahamed2006adaptive}. However, we want to investigate the convergence behavior of \eqref{eq:ape_is_weight} and \eqref{eq:ape_v_update_approx} in a Markov chain setting where the state space $\mathcal S$ is continuous/large, and we are updating parametric/non-parametric $v_{\psi^{(n)}}$ and $p_{\theta^{(n)}}$. For this analysis, we focus on the  parameterized case (i.e. function approximator) which has better analytical properties, for insights into our practical implementation.

Before we proceed, we briefly outline the main steps and some key observations in our analysis. First, for the adaptive importance sampling $p^{(n)}_\theta$, the weight update and normalization scaling for $p^{(n)}$ in the above equations are replaced by an update on $\theta$, which is based on approximate loss minimization between distributions, e.g. KL divergence. However, even though the rate (and practical performance) of convergence towards $v^\star$ depends on how well we could update $p_{\theta^{(n)}}$ towards $p^\star$, the so-called "zero-variance" sampling distribution, whether the convergence to a fixed point (if there is one) happens with probability 1 is not affected by the evolution of $p_{\theta^{(n)}}$, as in principle one can always not use adaptive sampling to accelerate. Thus, the convergence of $p_{\theta^{(n)}}\rightarrow p^\star$ is not required for the convergence of $v_{\psi^{(n)}}$. Second, the analysis on convergence is generally not straightforward for the case $|\mathcal S|=\infty$. To handle the difficulties arising from the infinite cardinality, we discretize $\mathcal S$ to allow for a more compact representation. Consequently, we require certain regularity conditions on $(p(x,y), x,y\in\mathcal S)$ and $\{p_\theta (\cdot,\cdot):\mathcal S\times\mathcal S\rightarrow\mathbb R^{+}\}_{\theta}$. Third, the analysis relies on techniques from Approximate Value Iteration (AVI) (see \cite{ramaswamy2017analyzing}), which incorporate the error/bias in the stochastic iterative algorithms. In our analysis, such error can arise from the discretization of $\mathcal S$ and the expressiveness/accuracy of $\{v_\psi (\cdot):\mathcal S\rightarrow\mathbb R\}_{\psi}$ in approximating $(v^\star(x), x\in\mathcal S)$. 

\subsection{Definitions and assumptions}
Let $\mathcal F_n$ be the filtration generated by first n observation of the chain $\{s_0,s_1,...,s_n\}$. Let $\{p_{\theta^{(n)}}\}_n$ be the sequence of sampling distribution (derived from $\pi_E$ updates in Algorithm 1, ideally $p_{\theta^{(n)}}\rightarrow p^\star$, the zero-variance sampling distribution for estimating $v^\star$) where $\theta^{(n)}$ is $\mathcal F_n-$ measurable.

\begin{definition}\label{df1}
For a transition kernel $(q(x,y), x,y \in \mathcal S)$ and $\epsilon$, define 
\begin{equation*}
    M(\epsilon,q)\coloneqq \sup_{\|x-y\|_\infty\leq \epsilon} \frac{|q(x,z)-q(y,z)|}{q(y,z)},
\end{equation*}
with the convention $0/0=0$.
\end{definition}

\begin{assumption}\label{as1}
We assume (A1.) The state space is a compact subset of $\mathbb R^d$; (A2.) The Markov chain under the original transition kernel $p(\cdot,\cdot)$ and $\{p_{\theta^{(n)}}\}_n$ are bounded above by some constant $K$ and guarantee $\mathbb E\tau < \infty$ (which implies $\tau<\infty$ w.p.1); (A3.) There exists $\epsilon_0>0$ such that
\begin{equation*}
    \sup_{\substack{\epsilon\leq\epsilon_0 \\ q\in\{p_{\theta^{(n)}}\}_n\cup\{p\}}} \frac{M(\epsilon,q)}{\epsilon} < M
\end{equation*}
for some $M<\infty$; (A4.) There exist some constant $C>1$ such that 
\begin{equation*}
    |r(x,z)-r(y,z)|\leq C\|x-y\|_\infty,
\end{equation*}
for all $x,y,z\in\mathcal S$.

\end{assumption}

Notice the main goal of adaptive importance sampling is to approach the "zero-variance" sampling distribution for $v^\star$ which would ideally make the iteration towards "rare events" much faster. On the other hand, from a technical standpoint, the condition $\mathbb E[\tau]<\infty$ also alleviates the analytical difficulty arising from the fact that our MDP is undiscounted. We start with the following lemma.
\begin{lemma} \label{discount}
Let $\{s_n\}_{n\geq 0}$ be the Markov chain and let the non-negative reward function $r(\cdot,\cdot)$ be uniformly bounded as $\|r\|_\infty<R$. Let $\gamma\in (0,1)$ be a discount factor. Then, as $\gamma\rightarrow 1$
\begin{equation*}
    \mathbb E[\sum_{t=1}^{\tau}\gamma^{t-1}r(s_{t-1},s_t)] \rightarrow \mathbb E[\sum_{t=1}^{\tau}r(s_{t-1},s_t)],
\end{equation*}
when $\mathbb E[\tau]<\infty$ (the convergence is uniform for all $s_0\in\mathcal S$).
\end{lemma}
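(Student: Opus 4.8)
The plan is to control the nonnegative gap between the undiscounted and discounted returns and show it vanishes as $\gamma \uparrow 1$, uniformly in the initial state. Since $r \ge 0$ and $\gamma \in (0,1)$, I write $g_\gamma(s_0) := \mathbb{E}_{s_0}[\sum_{t=1}^{\tau} r(s_{t-1},s_t)] - \mathbb{E}_{s_0}[\sum_{t=1}^{\tau} \gamma^{t-1} r(s_{t-1},s_t)] = \mathbb{E}_{s_0}[\sum_{t=1}^{\tau} (1-\gamma^{t-1}) r(s_{t-1},s_t)] \ge 0$, so both expectations are finite (each dominated by $R\,\mathbb{E}_{s_0}[\tau] < \infty$) and it suffices to prove $g_\gamma(s_0) \to 0$. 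Using $\|r\|_\infty < R$ and $0 \le 1 - \gamma^{t-1} \le 1$, I bound $g_\gamma(s_0) \le R\,\mathbb{E}_{s_0}[\sum_{t=1}^{\tau} (1-\gamma^{t-1})]$, reducing everything to the reward-free quantity $h_\gamma(s_0) := \mathbb{E}_{s_0}[\sum_{t=1}^{\tau} (1-\gamma^{t-1})]$.

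For a fixed $s_0$, pointwise convergence is immediate: on the almost-sure event $\{\tau < \infty\}$ the integrand $\sum_{t=1}^{\tau}(1-\gamma^{t-1})$ is a finite sum that decreases monotonically to $0$ as $\gamma \uparrow 1$, is dominated by $\tau$, and $\mathbb{E}_{s_0}[\tau] < \infty$; monotone (or dominated) convergence then gives $h_\gamma(s_0) \downarrow 0$. The uniform version, however, needs a truncation. I split at a level $N$: $h_\gamma(s_0) = \mathbb{E}_{s_0}[\sum_{t=1}^{\tau}(1-\gamma^{t-1})\mathbf{1}\{\tau\le N\}] + \mathbb{E}_{s_0}[\sum_{t=1}^{\tau}(1-\gamma^{t-1})\mathbf{1}\{\tau> N\}]$. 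On $\{\tau \le N\}$ each summand is at most $1-\gamma^{N-1}$ and there are at most $N$ of them, so the first term is $\le N(1-\gamma^{N-1})$, which does not depend on $s_0$ and tends to $0$ as $\gamma \uparrow 1$. On $\{\tau > N\}$ I use $1-\gamma^{t-1}\le 1$ to bound the second term by $\mathbb{E}_{s_0}[\tau\,\mathbf{1}\{\tau>N\}]$.

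Given $\epsilon>0$, I would first pick $N$ so that $\sup_{s_0}\mathbb{E}_{s_0}[\tau\,\mathbf{1}\{\tau>N\}]<\epsilon/(2R)$, then take $\gamma$ close enough to $1$ that $N(1-\gamma^{N-1})<\epsilon/(2R)$; this yields $\sup_{s_0} g_\gamma(s_0) < \epsilon$ and hence uniform convergence. The crux is therefore justifying $\sup_{s_0\in\mathcal S}\mathbb{E}_{s_0}[\tau\,\mathbf{1}\{\tau>N\}]\to 0$ as $N\to\infty$, i.e. uniform integrability of the family $\{\tau\}_{s_0\in\mathcal S}$; mere finiteness of each $\mathbb{E}_{s_0}[\tau]$ is not enough. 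I would obtain this from Assumption \ref{as1} together with the reachability of $\mathcal T$ stated in Section 3: compactness of $\mathcal S$ (A1) and the bounded, reachable transition structure (A2) yield a uniform minorization, namely some $N_0$ and $\eta>0$ with $\inf_{s\in\mathcal S}\mathbb{P}_s(\tau\le N_0)\ge\eta$, whence a uniform geometric tail $\sup_{s}\mathbb{P}_s(\tau>kN_0)\le(1-\eta)^k$ follows by the strong Markov property. This geometric tail gives the required uniform integrability of $\tau$ and closes the argument.

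The main obstacle is exactly this uniform minorization step: extracting $\inf_{s}\mathbb{P}_s(\tau\le N_0)\ge\eta$ from (A1)--(A2) and reachability, since it upgrades the per-state finiteness $\mathbb{E}_{s_0}[\tau]<\infty$ to a uniform tail control. Once that is in hand, the $\gamma\to 1$ bookkeeping (the monotone-convergence pointwise limit and the two-term split above) is routine.
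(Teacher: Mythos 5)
Your core argument is the same as the paper's: both proofs express the gap as $\mathbb E\big[\sum_{t=1}^{\tau}(1-\gamma^{t-1})r(s_{t-1},s_t)\big]$, split it at a finite horizon, bound the early part by $R$ times a quantity like $\sum_{t\le N}(1-\gamma^{t-1})$ (small once $\gamma$ is near $1$), and bound the late part by $R$ times a tail quantity that vanishes because $\mathbb E[\tau]<\infty$ (the paper uses $\sum_{t>T-1}P(\tau\ge t-1)$, you use $\mathbb E_{s_0}[\tau\,\mathbb{I}_{\{\tau>N\}}]$ --- the same estimate up to bookkeeping). Where you genuinely go beyond the paper is the parenthetical uniformity claim: the paper's proof chooses $T$ from the tail of $\tau$ under a \emph{fixed} initial state and never argues this choice works simultaneously for all $s_0$, so as written it establishes only pointwise convergence; you correctly identify that uniformity requires uniform integrability of $\{\tau\}_{s_0\in\mathcal S}$, i.e.\ $\sup_{s_0}\mathbb E_{s_0}[\tau\,\mathbb{I}_{\{\tau>N\}}]\to 0$, which per-state finiteness of $\mathbb E_{s_0}[\tau]$ does not supply. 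Your proposed route to it (a uniform minorization $\inf_s\mathbb P_s(\tau\le N_0)\ge\eta$ plus the strong Markov property) is the right ingredient, but note two things. First, your sketch of the minorization is itself incomplete: compactness (A1) and boundedness of the kernels (A2) together with reachability do not yield $\inf_s\mathbb P_s(\tau\le N_0)\ge\eta$ without some continuity of the kernel in the initial state, which is what (A3) of Assumption \ref{as1} would have to be invoked for. Second, if one reads (A2) as asserting a \emph{uniform} bound $\sup_{s_0}\mathbb E_{s_0}[\tau]\le C$ (arguably the intended reading), the minorization follows at once from Markov's inequality with $N_0=2C$ and $\eta=1/2$, making your crux step elementary. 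So your write-up is more honest than the paper's about what uniformity costs, but to be complete it still needs one of these two closures of the minorization step.
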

\begin{proof}
Since $t>0$ and it discrete, it follows from Fubini's theorem that $\mathbb E[\tau]=\sum_{t=0}^\infty P(\tau>t)$. Thus, $ \lim_{t \to \infty} \sum_{t'>t}P(\tau>t') = 0$. Given any $\epsilon>0$, we can find $T>1$ large enough such that $\sum_{t> T-1}P(\tau\geq t-1)<\frac{\epsilon}{2R}$ and $\gamma$ close enough to 1 such that $\sum_{t=1}^{T-1}(1-\gamma^{t-1})<\frac{\epsilon}{2R}$. Then,
\begin{align*}
     &\mathbb E[\sum_{t=1}^{\tau}r(s_{t-1},s_t)]-\mathbb E[\sum_{t=1}^{\tau}\gamma^{t-1}r(s_{t-1},s_t)] \nonumber\\
     =& \mathbb E[\sum_{t=1}^{\infty}(1-\gamma^{t-1})r(s_{t-1},s_t) \mathbb{I}_{ \{\tau\geq t-1 \}}] \nonumber\\
     \leq & R \sum_{t=1}^{T-1}(1-\gamma^{t-1})+R\sum_{t>T-1} P(\tau\geq t-1) <\epsilon.
\end{align*}
\end{proof}
 
As the proof indicates, the rate of convergence for the discounted expected towards the undiscounted one relies on how fast the tail of $\tau$ diminishes, which suggests how a consistent and efficient adaptive sampling distribution that can quickly realize the hitting time $\tau$ can give nice analytical properties.

\subsection{Chain discretization}
For the analysis of the continuous Markov chain derived from the MDP update \eqref{eq:ape_is_weight} and \eqref{eq:ape_v_update_approx}, we first discretize the continuous state $|\mathcal S|$. Since $\mathcal S$ is compact, for any $h>0$, we can find a finite collection of sets $\{\mathcal S^h_i\}_{i=1}^{N_h}\subseteq\mathcal S$ such that $\cup_{i\in[N_h]}\mathcal S_i^h=\mathcal S$, $\mathcal S_i^h \cap \mathcal S_j^h=\emptyset$ for $i\neq j$ and \begin{equation*}
    \max_{i\in[N_h]} \sup_{ x,y\in\mathcal S_i^h} \|x-y\|_\infty\leq h.
\end{equation*}
To simplify analysis, we further assume that there are no overlaps between different types of states among $\{\mathcal S^h_i\}_{i=1}^{N_h}$, meaning that for any $i\in[N_h]$, either $\mathcal S^h_i \subseteq \mathcal I$ or $\mathcal S^h_i \subseteq \mathcal R$ or $\mathcal S^h_i \subseteq \mathcal T \cap \mathcal R^c$, thus avoiding possible confusion in classifying the states after discretization. Now, for any $i\in N_h$, let $s^h_i$ be a representative element of $\mathcal S^h_i$ and define $\tilde{\mathcal S}^h \coloneqq\{s_i^h\}_{i=1}^{N_h}$ be the set of all such representatives. Moreover, we arrange $\tilde{S}^h_{\mathcal I}\coloneqq \{s_i^h\}_{i=1}^{N_{h,\mathcal I}}$ to be the representatives contained in $\mathcal I$.

Consequently, for any $x\in\mathcal S$, there exists a unique $i_x\in[N_h]$ such that $x\in\mathcal S_{i_x}^h$ and a corresponding representative element $s^h_{i_x}$. Let $\sigma(\cdot) :\mathcal S\rightarrow \tilde{\mathcal S}^h$ be the function that maps $x$ to $s^h_{i_x}$.

Now we are in a position to "discretize" the reward $r: \mathcal S\times \mathcal S\rightarrow \mathbb R$ and transition probability $p: \mathcal S\times \mathcal S\rightarrow \mathbb R$. Let $\lambda$ be the Lebesgue measure on $\mathcal S$. Define:
\begin{align*}
    \tilde{r}_h(x,y)=& r(\sigma(x),\sigma(y)) \nonumber\\
    \tilde{p}_h(x,y)=&\frac{p(\sigma(x),\sigma(y))}{\int_{S}p(\sigma(x),\sigma(z))\lambda(dz)} 
\end{align*} 

Let $\gamma\in(0,1)$ be a discount factor (typically close to 1). Define the following operators (Bellman operator from a MDP standpoint) from $\mathbb R^{\mathcal S}$ to $\mathbb R^{\mathcal S}$:
\begin{align*}
    Tv(x)=&\int_{S} (r(x,y)+v(y))p(x,y)\lambda(dy),\nonumber\\
    T_\gamma v(x)=& \int_{S} (r(x,y)+\gamma v(y))p(x,y)\lambda(dy),\nonumber\\
    \tilde{T}^h v(x)=&\int_{S} (\tilde{r}_h(x,y)+v(y))\tilde{p}_h(x,y)\lambda(dy),\nonumber\\
    \tilde{T}^h_\gamma v(x)=&\int_{S} (\tilde{r}_h(x,y)+\gamma v(y))\tilde{p}_h(x,y)\lambda(dy).
\end{align*}

For $\gamma\in(0,1)$, the operator is known to be a contraction and a unique fixed point exist (i.e. $Tv=v$). For the operator without discount factor, a unique fixed point exists under regularity conditions \cite{ahamed2006adaptive,puterman1990markov}. We assume such conditions are met and let $v^\star$, $v^\star_\gamma$, $\tilde{v}^{h,\star}$ and $\tilde{v}^{h,\star}_\gamma$ be their respective unique fixed points. 

Let $\mathcal L_h$ be the space of functions that are constant on each $\mathcal S_h^i$. It can be shown that $\tilde{r}_h,\tilde{p}_h,\tilde{v}^{h,\star},\tilde{v}^{h,\star}_\gamma \in \mathcal L_h$. Consequently, we can view the chain on $\mathcal S$ with $\tilde{r}_h,\tilde{p}_h$ as a chain on discrete state space $\tilde{\mathcal S}^h$ with a straightforward discretized version of $\tilde{r}_h,\tilde{p}_h$ and the value target $\tilde{v}^{h,\star}$.

Now, suppose we can obtain $\tilde{v}^{h,\star}$ for this discretized chain but we are interested in estimating $v^\star$, we would need to analyze $\|v^\star-\tilde{v}^{h,\star}\|_\infty$.
\begin{lemma}\label{discounterror}
For any given $\epsilon>0$, there exists some $h>0$ such that 
\begin{equation*}
   \|v^\star-\tilde{v}^{h,\star}\|_\infty < \epsilon. 
\end{equation*}
\end{lemma}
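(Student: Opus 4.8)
The plan is to use the discount factor $\gamma\in(0,1)$ as a bridge between the continuous and discretized problems, exactly because the undiscounted operators $T$ and $\tilde T^h$ need not be contractions in $\|\cdot\|_\infty$, whereas $T_\gamma$ and $\tilde T^h_\gamma$ are genuine $\gamma$-contractions with well-behaved unique fixed points. I would first split the quantity of interest by the triangle inequality,
\[
\|v^\star-\tilde v^{h,\star}\|_\infty \le \|v^\star-v^\star_\gamma\|_\infty + \|v^\star_\gamma-\tilde v^{h,\star}_\gamma\|_\infty + \|\tilde v^{h,\star}_\gamma-\tilde v^{h,\star}\|_\infty .
\]
The first and third terms are the discounted-to-undiscounted gaps for the original chain and for the discretized chain with kernel $\tilde p_h$, respectively; both are controlled by Lemma \ref{discount} once we identify $v^\star_\gamma$ with $\mathbb E_{x}[\sum_{t=1}^\tau \gamma^{t-1}r(s_{t-1},s_t)]$ and $\tilde v^{h,\star}_\gamma$ with its analogue under $(\tilde r_h,\tilde p_h)$, noting $\|\tilde r_h\|_\infty<R$. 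The middle term is the genuine discretization error at a fixed discount level, and is the part requiring the regularity assumptions.

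For the middle term I would use the standard fixed-point perturbation estimate for contractions. Since $v^\star_\gamma=T_\gamma v^\star_\gamma$, $\tilde v^{h,\star}_\gamma=\tilde T^h_\gamma\tilde v^{h,\star}_\gamma$, and both operators are $\gamma$-contractions,
\[
\|v^\star_\gamma-\tilde v^{h,\star}_\gamma\|_\infty \le \frac{1}{1-\gamma}\,\big\|T_\gamma v^\star_\gamma-\tilde T^h_\gamma v^\star_\gamma\big\|_\infty .
\]
It then remains to bound $\|T_\gamma w-\tilde T^h_\gamma w\|_\infty$ at $w=v^\star_\gamma$, which is bounded because $\|w\|_\infty\le R\,\mathbb E\tau<\infty$ by (A2). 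Splitting the integrand into a reward part and a kernel part, (A4) gives $|r(x,y)-\tilde r_h(x,y)|=|r(x,y)-r(\sigma(x),\sigma(y))|\le 2Ch$ since $\|x-\sigma(x)\|_\infty\le h$ and $\|y-\sigma(y)\|_\infty\le h$; and (A3) controls the kernel discrepancy through $|p(x,y)-p(\sigma(x),y)|\le M(h,p)\,p(\sigma(x),y)\le Mh\,p(\sigma(x),y)$ once $h\le\epsilon_0$, while the normalizing constant $Z_h(x)=\int_{\mathcal S}p(\sigma(x),\sigma(z))\lambda(dz)$ stays within $O(h)$ of $\int_{\mathcal S}p(\sigma(x),y)\lambda(dy)=1$ by a Riemann-sum estimate under the same regularity. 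Collecting the pieces yields $\|T_\gamma w-\tilde T^h_\gamma w\|_\infty=O\big((C+M\|w\|_\infty)h\big)\to 0$ as $h\to0$ for fixed $\gamma$.

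The order of quantifiers matters, and I would fix it carefully. First choose $\gamma$ close enough to $1$ so that $\|v^\star-v^\star_\gamma\|_\infty<\epsilon/3$ and, simultaneously, $\sup_h\|\tilde v^{h,\star}_\gamma-\tilde v^{h,\star}\|_\infty<\epsilon/3$ (both via Lemma \ref{discount}). With $\gamma$ now fixed, the factor $\tfrac{1}{1-\gamma}$ is a fixed constant, so I can finally choose $h\le\epsilon_0$ small enough that the middle-term bound is below $\epsilon/3$. Summing the three contributions gives $\|v^\star-\tilde v^{h,\star}\|_\infty<\epsilon$, as desired.

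The main obstacle is the uniformity in $h$ needed for the third term: Lemma \ref{discount} only makes the discounted-to-undiscounted gap small for a single fixed chain, whereas here I must make it small for the entire family $\{\tilde p_h\}_{h>0}$ by a single choice of $\gamma$. This requires a tail bound $\sum_{t>T}P(\tilde\tau_h\ge t)$ for the hitting time $\tilde\tau_h$ under $\tilde p_h$ that does not degrade as $h\to0$. I would establish it by showing that $\tilde p_h$ stays uniformly close to $p$ (again via (A3), together with the structural assumption that each cell $\mathcal S^h_i$ lies entirely in $\mathcal I$, $\mathcal R$, or $\mathcal T\cap\mathcal R^c$, so the absorbing structure is preserved), and then transferring the finite-mean/uniform-integrability control of $\tau$ under $p$ from (A2) to a uniform-in-$h$ control of $\tilde\tau_h$. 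Making this transfer rigorous — rather than the comparatively routine reward and kernel Lipschitz estimates — is where the real work lies.
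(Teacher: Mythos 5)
Your proposal is correct and structurally identical to the paper's proof: the same three-term triangle-inequality split through the discounted fixed points $v^\star_\gamma$ and $\tilde{v}^{h,\star}_\gamma$, and the same quantifier order (fix $\gamma$ via Lemma \ref{discount} first, then shrink $h$). The genuine difference is the middle term: the paper disposes of it by citing \cite{chow1991optimal} for the bound $\|v^\star_\gamma-\tilde{v}^{h,\star}_\gamma\|_\infty\leq \frac{C}{1-\gamma}h$ with $C$ independent of $\gamma$ and $h$, whereas you derive a bound of the same form from first principles --- the contraction fixed-point perturbation inequality $\|v^\star_\gamma-\tilde{v}^{h,\star}_\gamma\|_\infty\leq\frac{1}{1-\gamma}\|T_\gamma v^\star_\gamma-\tilde{T}^h_\gamma v^\star_\gamma\|_\infty$ followed by an $O(h)$ estimate of the operator discrepancy via (A3) and (A4). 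Your route is more self-contained and makes explicit where Assumptions (A3)--(A4) actually enter (the paper states them but never visibly invokes them in this proof; they serve implicitly as hypotheses of the cited result), at the cost of tracking constants yourself; your $O\bigl((C+M\|v^\star_\gamma\|_\infty)h/(1-\gamma)\bigr)$ with $\|v^\star_\gamma\|_\infty\leq R\,\mathbb{E}\tau$ recovers the cited form. Two further points. The uniformity-in-$h$ issue you flag for the third term is real, and it is shared by the paper: the paper asserts $\|\tilde{v}^{h,\star}-\tilde{v}^{h,\star}_\gamma\|_\infty<\epsilon/3$ ``for any $h>0$'' with a single choice of $\gamma$, yet Lemma \ref{discount} is proved for one fixed chain, and Assumption (A2) covers only $p$ and $\{p_{\theta^{(n)}}\}_n$, not the discretized kernels $\tilde{p}_h$; so your sketched uniform tail-bound transfer fills a gap in the paper's own argument rather than introducing a new one. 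Finally, a minor caveat on your middle term: (A4) as written gives Lipschitz continuity only in the first argument of $r$, so your estimate $|r(x,y)-r(\sigma(x),\sigma(y))|\leq 2Ch$ tacitly strengthens it to both arguments (and the Riemann-sum control of the normalizing constant similarly needs second-argument regularity of the kernel) --- a looseness inherited from the paper's assumptions, not an error of yours.
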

\begin{proof}
It follows from Lemma \ref{discount} we can find $\gamma$ close enough to 1 such that
\begin{align*}
   \|v^\star-{v}^\star_\gamma\|_\infty <& \frac{\epsilon}{3} \nonumber\\
    \|\tilde{v}^{h,\star}-\tilde{v}^{h,\star}_\gamma\|_\infty <& \frac{\epsilon}{3}
\end{align*}
for any $h>0$. It then follows from our discretization of the chain and \cite{chow1991optimal} (the proof is for MDP but it follows for Markov chain as well, as we can always conceptualize Markov chain as a MDP with only one action at each state) that there is some constant $C>1$ such that 
\begin{equation*}
    \|v^\star_\gamma-\tilde{v}^{h,\star}_\gamma\|_\infty \leq \frac{C}{1-\gamma} h,
\end{equation*}
where $C$ does not depend on $\gamma$ or $h$. Thus, if we pick $h$ small enough such that $h<\frac{1-\gamma}{3C}\epsilon$, then
\begin{equation*}
    \|v^\star-\tilde{v}^{h,\star}\|_\infty \leq \|v^\star-{v}^\star_\gamma\|_\infty+\|v^\star_\gamma-\tilde{v}^{h,\star}_\gamma\|_\infty +\|\tilde{v}^{h,\star}_\gamma-\tilde{v}^{h,\star}\|_\infty < \epsilon.
\end{equation*}
\end{proof}

We define 
\begin{equation}\label{error1}
\epsilon_h\coloneqq \|v^\star-\tilde{v}^{h,\star}\|_\infty    
\end{equation}
 as the discretization. We discuss in the final section on how this parameter affect the convergence of APE. After we can control $\epsilon_h= \|v^\star-\tilde{v}^{h,\star}\|_\infty$, we wanted to analyse the error bound between $\tilde{v}^{h,\star}$ and our updated  $v_{\psi^{(n)}}$ derived from \eqref{eq:ape_is_weight} and \eqref{eq:ape_v_update_approx}.

\subsection{Approximate value iteration}
As mentioned before, we denote the sequence of adaptive sampling distribution (derived from $\pi_E$ updates) to be $\{p_{\theta^{(n)}}\}_{n\geq 0}\subseteq \{p_\theta\}_\theta$ where $\theta^{(n)}\in\mathcal F_n$. If we have picked the discretization parameter $h$ sufficiently small, and our parametric or non-parametric function approximators can guarantee to be uniformly Lipschitz:
\begin{equation}\label{Lip}
    \sup_{\substack{\psi\in\Psi \\ \|x-y\|_\infty\leq h}} \frac{|v_{\psi}(x)-v_\psi(y)|}{h} < L
\end{equation}
for some $L<\infty$. Under such assumption, we want to show the update $\{v_{\psi^{(n)}}\}_{n\geq 0}$ from \eqref{eq:ape_v_update_approx} in the continuous space $\mathcal S$ can be viewed approximately as update towards $\tilde{v}^h$ on $\tilde{\mathcal S}^h$. To be specific, we again invoke the discrete Markov chain view with $(\tilde{S}^h,\tilde{p}_h,\tilde{r}_h)$ (technically the discretized version of $\tilde{p}_h,\tilde{r}_h$ only defined on $\tilde{S}^h\times\tilde{S}^h$). Suppose, on this discrete chain, we are using the following asynchronous stochastic approximation updates with given adaptive importance sampling $\{\tilde{p}_h^{(n)}\}_{n\geq 0}$ and learning rate $\alpha_n$:
\begin{align}\label{noapp}
    & \tilde{v}^{h,(n+1)}(s_i^h)=\tilde{v}^{h,(n)}(s_i^h)+ \alpha_n \nonumber \\
    & \quad \quad \mathbb I_{\{s_n=s_i^h\}}(\tilde{T}^h_i\tilde{v}^{h,(n)}-\tilde{v}^{h,(n)}(s_i^h)+M_i(n+1))
\end{align}
for all $i\in[N_{h,\mathcal I}]$ (we are only interested in states in $\tilde{S}^h_{\mathcal I}$ since $\tilde{v}^{h}(s)=0$ for $s\in\mathcal T$), where we slightly abuse notation by letting $\tilde{T}^h$ be the discrete version of the previously defined $\tilde{T}^h$ (this operator was previously $\mathbb R^{\mathcal S}\rightarrow\mathbb R^{\mathcal S}$ but now is restricted to $\mathbb R^{N_{h,\mathcal I}}\rightarrow\mathbb R^{N_{h,\mathcal I}}$):
\begin{align*}
    &\tilde{T}^h=[\tilde{T}^h_1,..., \tilde{T}^h_{N_{h,\mathcal I}}]^T  \nonumber\\
    &\tilde{T}^h_i([x_1,...,x_{N_{h,\mathcal I}}])=\sum_{j\in[N_{h,\mathcal I}]}\tilde{p}_h(s_i^h,s_j^h)x_j \\
    & \quad \quad + \sum_{j\in[N_h]} \tilde{p}_h(s_i^h,s_j^h)\tilde{r}_h(s_i^h,s_j^h)
\end{align*}
for $i\in[N_{h,\mathcal I}]$ and we let 
\begin{align}\label{Mupdate}
M(n+1)=&[M_1(n+1),...,M_{N_{h,\mathcal I}}(n+1)]\nonumber\\
 M_i(n+1)=&(\tilde{r}_h(s_n,s_{n+1})+\tilde{v}^{h,(n)}(s_{n+1}))\frac{\tilde{p}_h(s_n,s_{n+1})}{\tilde{p}_h^{(n)}(s_n,s_{n+1})} \nonumber  \\
& -\tilde{T}^h_i\tilde{v}^{h,(n)}
\end{align}
for $i\in[N_{h,\mathcal I}]$. It then follows from \cite{ahamed2006adaptive} that $\mathbb E[M(n+1)|\mathcal F_n]=0$ and $\|M(n+1)\|_\infty \leq K(1+\|\tilde{v}^{h,(n)}\|_\infty)$ for some $K>0$. Then it follows also as in \cite{ahamed2006adaptive} (from Perron-Frobenius theorem) that $\tilde{T}^h$ is a contraction w.r.t the norm induced from the eigenvector $w\in\mathbb R^{N_{h,\mathcal I}}$ of the transition matrix $\tilde{p}_h$ (restricted on $\tilde{\mathcal S}^h_\mathcal I$): 
\begin{equation}\label{wvector}
    \|\tilde{T}^h(x)-\tilde{T}^h(y)\|_w \leq \alpha \|x-y\|_w
\end{equation}
where $\|x\|_w=\max_{i\in N_{h,\mathcal I}}|\frac{x_i}{w_i}|$ and $\alpha\in (0,1)$. It follows from Theorem 1 of \cite{ahamed2006adaptive} that $\lim_{n\rightarrow\infty}\tilde{v}^{h,(n)}=\tilde{v}^h$ almost surely on $\tilde{S}^h_\mathcal I$. 

Next, to analyse the effect for parameterized value function in \eqref{noapp}, we introduce approximations of the Bellman operator from Approximate Value Iteration (AVI) as in \cite{ramaswamy2017analyzing}, which is a stochastic iterative of value iteration that are asymptotically biased/noisy and we denote this approximation operator by $A$. The use of $A$ allows a generic framework for analysis since a variety of approximations for Bellman operator exists, as we have to use approximation of objective function in high dimensions such as DNN, ANN in Deep RL or linear function approximators, or Gaussian Process and so on. Typically, if we rewrite \eqref{noapp} in a general form (not necessarily restricted to asynchronous update anymore)
\begin{equation*}
    \tilde{v}^{h,(n+1)}=\tilde{v}^{h,(n)}+\alpha_n(\tilde{T}^h\tilde{v}^{h,(n)}-\tilde{v}^{h,(n)}+M_{n+1}),
\end{equation*}
then, the additional operator $A$ in a parameterized setting ( in this case also discretized on $\tilde{\mathcal S}^h$ but not in general) gives update 
\begin{equation*}
    \tilde{v}^{h}_{\psi^{(n+1)}}=\tilde{v}^{h}_{\psi^{(n)}}+\alpha_n(A\tilde{v}^{h}_{\psi^{(n)}}-\tilde{v}^{h}_{\psi^{(n)}}+M_{n+1}),
\end{equation*}
or more generally 
\begin{equation*}
    \tilde{v}^{h}_{\psi^{(n+1)}}=\tilde{v}^{h}_{\psi^{(n)}}+\alpha_n(\tilde{T}^h\tilde{v}^{h}_{\psi^{(n)}}-\tilde{v}^{h}_{\psi^{(n)}}+\epsilon_n+M_{n+1}),
\end{equation*}
where $\epsilon_n$ represents the approximation error $A\tilde{T}^h\tilde{v}^{h}_{\psi^{(n)}}-\tilde{T}^h\tilde{v}^{h}_{\psi^{(n)}}$ but also could include additional noise term. For example, if in our update \eqref{Mupdate} we were to use $\frac{p(s_n,s_{n+1})}{p_{\theta^{(n)}}(s_n,s_{n+1})}$ similar to \eqref{eq:ape_is_weight} instead of $\frac{\tilde{p}_h(s_n,s_{n+1})}{\tilde{p}_h^{(n)}(s_n,s_{n+1})}$, we can still bound their difference by $O(h)$ term using Assumption \ref{as1} and Definition \ref{df1} which we may also incorporate into $\epsilon_n$. 

On the other hand, depending on the situation, $A$ would be different. For example, in linear function approximators, $A$ would be a projection operator that map the product of Bellman operator into the spaces spanned by the finite-dimensional basis functions. For DNN method in APE, it involves online learning and approximating bellman operator by updating the parameters which also shares a similar flavor as \eqref{eq:NF_sgd}. A strong condition which controls the difference between AVI and VI is to require the (AV3) condition in \cite{ramaswamy2017analyzing}:
\begin{assumption}\label{AV3}
\begin{equation}
    \limsup_{n\rightarrow\infty} \|\epsilon_n\|_w\leq \epsilon_p,
\end{equation}
for some fixed $\epsilon_p>0$.
\end{assumption}
 Other conditions (AV1)-(AV5) in \cite{ramaswamy2017analyzing} have all been discussed in the previous sections. It then follows from Theorem 3 in \cite{ramaswamy2017analyzing} that AVI converges to some fixed point $\tilde{v}^{h}_{\psi^{\star}}$ satisfying
 \begin{equation}\label{avibound}
     \|\tilde{T}^h\tilde{v}^{h}_{\psi^{\star}}-\tilde{v}^{h}_{\psi^{\star}}\|_w\leq \epsilon_p.
 \end{equation}
 Notice since $\tilde{T}^h$ is a contraction w.r.t $\|\cdot\|_w$ norm and $\tilde{v}^{h,\star}$ is the fixed point of $\tilde{T}^h$, i.e., $\tilde{v}^{h,\star}=\tilde{T}^h\tilde{v}^{h,\star}$. Thus,
 \eqref{avibound} implies 
 \begin{equation}\label{avilimit}
     \lim_{\epsilon_p\rightarrow 0}\|\tilde{v}^{h}_{\psi^{\star}}-\tilde{v}^{h,\star}\|_w \rightarrow 0
 \end{equation}
 Thus, if we take the view that our update is a special situation where $h$ is taken to be extremely small along with \eqref{Lip}, we may view our constructed update $\tilde{v}^h_{\psi^{(n)}}$ as a good approximation for the $v_{\psi^{(n)}}$ derived from \eqref{eq:ape_v_update_approx}:
 \begin{equation}\label{approx}
     v_{\psi^{(n)}}\approx\tilde{v}^h_{\psi^{(n)}}.
 \end{equation}
However, for the analysis of $\tilde{v}^h_{\psi^{(n)}}$, under the assumptions aforementioned, we have Lemma \ref{discounterror}, \eqref{avibound} \eqref{avilimit} and \eqref{approx} which suggest that
\begin{equation}\label{final}
    \lim_{n\rightarrow\infty}\tilde{v}^h_{\psi^{(n)}}\approx v^\star.
\end{equation}

\subsection{Concluding remarks on convergence and practical implementation}

The convergence analysis relies on several strong assumptions and is not always met in practice. For example, it is well-known Assumption \ref{AV3} may not hold and  $Q$-learning with function approximator is not necessarily convergent. However, the analysis above provides us with valuable guidelines on how one may improve the practical implementation of APE. We summarize them as follows.
\begin{itemize}
    \item The $\epsilon_p$ in Assumption \ref{AV3} depends on how expressive $\{v_\psi\}_\psi$ in terms of approximating $v^\star$. For example, for function approximators, it is known in \cite{melo2007q} that $\|\tilde{v}^{h}_{\psi^{\star}}-\tilde{v}^{h,\star}\|_\infty$ is bounded by a constant times $\|\Pi\tilde{v}^{h,\star}-\tilde{v}^{h,\star}\|$ where $\Pi$ is the projection operator which maps function into the space spanned by $\{v_\psi\}_\psi$. The more expressive is $\{v_\psi\}_\psi$, the more accurate we can bound $\|\tilde{v}^{h}_{\psi^{\star}}-\tilde{v}^{h,\star}\|_\infty$. Thus, in practice, when one found APE is having a hard time exhibiting convergence, it is sensible to try a more expressive collection to approximate $v^\star$.
    
    \item In Lemma \ref{discounterror}, we saw the approximation error is related to discretization parameter $h$. Smaller $h$ leads to higher accuracy, which in some sense justifies our implementation of APE which directly updates the value on a continuous state $\mathcal S$. It can be viewed as our artificial update \eqref{noapp} on an infinitely small discretization $h$. However, to make the numerical update more robust, we sample a buffer $\mathcal D$ and aggregate our value update as opposed to the update \eqref{noapp}.
    
    \item The choice adaptive sampling distribution is not directly mentioned in the analysis above. However, the approximation error in Lemma \ref{discounterror} also has a implicit dependence on a hidden parameter $\gamma$ which determines how close is $\mathbb E[\sum_{t=1}^{\tau}\gamma^{t-1}r(s_{t-1},s_t)]$ from $\mathbb E[\sum_{t=1}^{\tau}r(s_{t-1},s_t)]$. The difference between these two quantities largely depend on how fast the tail of $\tau $ diminishes. The almost only effective way to control this is to adaptively choose importance sampling distribution approaching the "zero-variance" sampling distribution $p^\star$.
    
\end{itemize}

\section*{ACKNOWLEDGMENT}

The authors gratefully acknowledge the support from the National Science Foundation (under grants IIS-1849280 and IIS-1849304), and unrestricted research grant from the Toyota Motor North America. The ideas, opinions and conclusions presented in this paper are solely those of the authors.



\bibliographystyle{IEEEtran} 
\bibliography{ape}